\definecolor{mydarkblue}{rgb}{0,0.08,0.45}
\renewcommand{\cite}[1]{\citep{#1}} 
\date{}
\DeclarePairedDelimiterX{\lin}[2]{\langle}{\rangle}{#1, #2}
\DeclarePairedDelimiterX{\abs}[1]{\lvert}{\rvert}{#1}
\DeclarePairedDelimiterX{\norm}[1]{\lVert}{\rVert}{#1}
\DeclarePairedDelimiterX{\cbr}[1]{\{}{\}}{#1} 
\DeclarePairedDelimiterX{\rbr}[1]{(}{)}{#1} 
\DeclarePairedDelimiterX{\sbr}[1]{[}{]}{#1} 
  \providecommand{\R}{\mathbb{R}} 
  \providecommand{\real}{\mathbb{R}} 
  \DeclareMathOperator{\expect}{\mathbb{E}}
  \DeclareMathOperator{\sgn}{sign}
  \def\sign{\@ifnextchar*{\@sgnargscaled}{\@ifnextchar[{\sgnargscaleas}{\@ifnextchar{\bgroup}{\@sgnarg}{\sgn} }}}
  \def\@sgnarg#1{\sgn\rbr{#1}}
  \def\@sgnargscaled#1{\sgn\rbr*{#1}}
  \def\@sgnargscaleas[#1]#2{\sgn\rbr[#1]{#2}}
  \DeclareMathOperator*{\argmin}{arg\,min}
  \providecommand{\0}{\mathbf{0}}
  \renewcommand{\aa}{\mathbf{a}}
  \providecommand{\bb}{\mathbf{b}}
  \providecommand{\ee}{\mathbf{e}}
  \renewcommand{\gg}{\mathbf{g}}
  \providecommand{\mm}{\mathbf{m}}
  \providecommand{\pp}{\mathbf{p}}
  \renewcommand{\ss}{\mathbf{s}}
  \providecommand{\vv}{\mathbf{v}}
  \providecommand{\xx}{\mathbf{x}}
  \providecommand{\yy}{\mathbf{y}}
  \providecommand{\txx}{\tilde\xx}
  \providecommand{\balpha}{\boldsymbol{\alpha}}
  \providecommand{\mA}{\mathbf{A}}
  \providecommand{\mG}{\mathbf{G}}
  \providecommand{\mX}{\mathbf{X}}
  \providecommand{\cC}{\mathcal{C}}
  \providecommand{\cO}{\mathcal{O}}
  \providecommand{\cQ}{\mathcal{Q}}
  \providecommand{\cU}{\mathcal{U}}
\newtheorem{theorem}{Theorem}
\newtheorem{lemma}{Lemma}
\newtheorem{remark}[lemma]{Remark}
\newtheorem{definition}{Definition}
\newtheorem{assumption}[definition]{Assumption}
  \providecommand{\signsgd}{\textsc{signSGD}}
  \providecommand{\adam}{ADAM} 
  \providecommand{\signum}{\textsc{signSGDm}}
  \providecommand{\esignsgd}{\textsc{ef-signSGD}}
  \providecommand{\ecsgd}{\textsc{ef-SGD}}
  \providecommand{\sgdm}{\textsc{SGDm}}
\title{Error Feedback Fixes SignSGD and other Gradient Compression Schemes}
\author{
  Sai Praneeth Karimireddy, Quentin Rebjock, Sebastian U. Stich, Martin Jaggi\\\vspace*{-0.3cm}
  EPFL\vspace*{2mm}\\
  	\texttt{\small \{sai.karimireddy,~quentin.rebjock,~sebastian.stich,~martin.jaggi\}@epfl.ch} \vspace*{-2mm}
}
\begin{document}

\maketitle
\vspace*{-0.5cm}

\begin{abstract}
Sign-based algorithms (e.g. {\signsgd}) have been proposed as a biased gradient compression technique to alleviate the communication bottleneck in training large neural networks across multiple workers. We show simple convex counter-examples where signSGD does not converge to the optimum.
Further, even when it does converge, signSGD may generalize poorly when compared with SGD.  These issues arise because of the biased nature of the sign compression operator.

We then show that using error-feedback, i.e. incorporating the error made by the compression operator into the next step, overcomes these issues. We prove that our algorithm (\ecsgd) with arbitrary compression operator achieves the \emph{same rate of convergence} as SGD without any additional assumptions. Thus \ecsgd\ achieves gradient compression \emph{for free}. Our experiments thoroughly substantiate the theory and show that error-feedback improves both convergence and generalization. Code can be found at \url{https://github.com/epfml/error-feedback-SGD}.
\end{abstract}


\section{Introduction}
Stochastic optimization algorithms \cite{Bottou2010:sgd} which are amenable to large-scale parallelization, taking advantage of massive computational resources \cite{krizhevsky2012imagenet,dean2012large} have been at the core of significant recent progress in deep learning \cite{schmidhuber2015deep,lecun2015deep}.
One such example is the {\signsgd} algorithm and its variants, c.f.~\cite{seide20141,bernstein2018signsgd,bernstein2019iclr}.

To minimize a continuous (possibly) non-convex function $f \colon \real^d \to \real$, the classic stochastic gradient algorithm (SGD) \cite{robbins1951stochastic} performs iterations of the form
\begin{equation}\label{eqn:sgd}\tag{SGD}
  \xx_{t+1} := \xx_t - \gamma\, \gg_t\,,
\end{equation}
where $\gamma \in \real$ is the step-size (or learning-rate) and $\gg_t$ is the stochastic gradient such that $\expect\sbr{\gg_t} = \nabla f(\xx_t)$.

Methods performing updates only based on the \emph{sign} of each coordinate of the gradient have recently gaining popularity for training deep learning models \cite{seide20141,Carlson:2015to,wen2017terngrad,balles2017dissecting,bernstein2018signsgd,zaheer2018adaptive,liu2018signsgd}.
For example, the update step of {\signsgd} is given by:
\begin{equation}\label{eqn:signsgd}\tag{\signsgd}
  \xx_{t+1} := \xx_t - \gamma \sign{\gg_t}\,.\hspace{-8mm}
\end{equation}
Such sign-based algorithms are particularly interesting since they can be viewed through two lenses: as i) approximations of adaptive gradient methods such as {\adam} \cite{balles2017dissecting}, and also a ii) communication efficient gradient compression scheme \cite{seide20141}.
However, we show that a severe handicap of sign-based algorithms is that they \emph{do not} converge in general. To substantiate this claim, we present in this work simple convex counter-examples where {\signsgd} cannot converge. The main reasons being that the $\sign$ operator loses information about, i.e. `forgets', i) the magnitude, as well as ii) the direction of $\gg_t$.
We present an elegant solution that provably fixes these problems of {\signsgd}, namely algorithms with \emph{error-feedback}.

\paragraph{Error-feedback.}
We demonstrate that the aforementioned problems of {\signsgd} can be fixed by i) scaling the signed vector by the norm of the gradient to ensure the magnitude of the gradient is not forgotten, and ii) locally storing the difference between the actual and compressed gradient, and iii) adding it back into the next step so that the correct direction is not forgotten. We call our `fixed' method {\esignsgd} (Algorithm~\ref{alg:esignsgd}).
%

\begin{algorithm}[t]
	\caption{{\esignsgd} ({\signsgd} with Error-Feedb.)}
		\label{alg:esignsgd}
	\begin{algorithmic}[1]
	\State \textbf{Input:} learning rate $\gamma$, initial iterate $\xx_0 \in \R^d$, $\ee_0 = \0$
	\For{$t = 0,\dots, T-1$}
    \State $\gg_t := \text{stochasticGradient}(\xx_t)$
    \State $\pp_{t} := \gamma \gg_t + \ee_t$ \hfill $\triangleright$ {\it error correction}
    \State $\Delta_t := (\norm{\pp_t}_1/d) \sign{\pp_t}$ \hfill $\triangleright$ {\it compression}
    \State $\xx_{t+1} := \xx_t - \Delta_t$ \hfill $\triangleright$ {\it update iterate}
    \State $\ee_{t+1} := \pp_t - \Delta_t$ \hfill $\triangleright$ {\it update residual error}
    \EndFor
    \end{algorithmic}
\end{algorithm}

In Algorithm~\ref{alg:esignsgd}, $\ee_t$ denotes the accumulated error from all quantization/compression steps. This residual error is added to the gradient step $\gamma \gg_t$ to obtain the corrected direction~$\pp_t$. When compressing $\pp_t$, the signed vector is again scaled by $\norm{\pp_t}_1$ and hence does not lose information about the magnitude. Note that our algorithm does not introduce any additional parameters and requires only the step-size $\gamma$.

\paragraph{Our contributions.}
We show that naively using biased gradient compression schemes (such as e.g. {\signsgd}) can lead to algorithms which may not generalize or even converge. We show both theoretically and experimentally that simply adding error-feedback solves such problems and recovers the performance of full SGD, thereby saving on communication costs. We state our results for {\signsgd} to ease our exposition but our positive results are valid for general compression schemes, and our counterexamples extend to {\signsgd} with momentum, multiple worker settings, and even other biased compression schemes. More specifically our contributions are:
\vspace{-1mm}
\begin{enumerate}[itemsep=3pt]
  \item We construct a simple convex non-smooth counterexample where {\signsgd} cannot converge, even with the full batch (sub)-gradient and tuning the step-size. Another counterexample for a wide class of smooth convex functions proves that {\signsgd} with stochastic gradients cannot converge with batch-size one.

  \item We prove that by incorporating error-feedback, {\signsgd}---as well as any other gradient compression schemes---always converge. Further, our theoretical analysis for non-convex smooth functions recovers the same rate as SGD, i.e. we get gradient compression \emph{for free}.

  \item We show that our algorithm {\esignsgd} which incorporates error-feedback approaches the linear span of the past gradients. Therefore, unlike {\signsgd}, {\esignsgd} converges to the max-margin solution in over-parameterized least-squares. This provides a theoretical justification for why {\esignsgd} can be expected to generalize much better than {\signsgd}.

  \item We show extensive experiments on CIFAR10 and CIFAR100 using Resnet and VGG architectures demonstrating that {\esignsgd} indeed significantly outperforms {\signsgd}, and matches SGD both on test as well as train datasets while reducing communication by a factor of $\sim 64\times$.
\end{enumerate}


\section{Significance and Related Work}
\paragraph{Relation to adaptive methods.} Introduced in \cite{kingma2014adam}, {\adam} has gained immense popularity as the algorithm of choice for adaptive stochastic optimization for its perceived lack of need for parameter-tuning. However since, the convergence \cite{reddi2018convergence} as well the generalization performance \cite{wilson2017marginal} of such adaptive algorithms has been called into question. Understanding when {\adam} performs poorly and providing a principled `fix' for these cases is crucial given its importance as the algorithm of choice for many researchers. It was recently noted by \citet{balles2017dissecting} that the behavior of {\adam} is in fact identical to that of {\signsgd} with \emph{momentum}: More formally, the {\signum} algorithm (referred to as 'signum' by \citet{bernstein2018signsgd,bernstein2019iclr}) adds momentum to the {\signsgd} update as:
\begin{equation}\label{eqn:signum}\tag{\signum}
\begin{split}
  \mm_{t+1} &:= \gg_t + \beta\mm_t\\
  \xx_{t+1} &:= \xx_t - \gamma \sign{\mm_{t+1}}\,.\hspace{-8mm}
\end{split}
\end{equation}
for parameter $\beta > 0$. This connection between signed methods and fast stochastic algorithms is not surprising since sign-based gradient methods were first studied as a way to speed up SGD \cite{riedmiller1993direct}. Given their similarity, understanding the behavior of {\signsgd} and {\signum} can help shed light on the convergence of {\adam}.

\paragraph{Relation to gradient compression methods.} As the size of the models keeps getting bigger, the training process can often take days or even weeks \cite{dean2012large}. This process can be significantly accelerated by massive parallelization \cite{li2014scaling,goyal2017accurate}. However, at these scales communication of the gradients between the machines becomes a bottleneck hindering us from making full use of the impressive computational resources available in today's data centers \cite{chilimbi2014project,seide20141,strom2015scalable}. A simple solution to alleviate this bottleneck is to compress the gradient and reduce the number of bits transmitted. While the analyses of such methods have largely been restricted to \emph{unbiased} compression schemes \cite{alistarh2017quantized,wen2017terngrad,wang2018atomo}, biased schemes which perform extreme compression practically perform much better \cite{seide20141,strom2015scalable,lin2017deep}---often \emph{without any loss} in convergence or accuracy. Of these, \cite{seide20141,strom2015scalable,wen2017terngrad} are all sign-based compression schemes. Interestingly, all the practical works \cite{seide20141,strom2015scalable,lin2017deep} use some form of error-feedback.

\paragraph{Error-feedback.} The idea of error-feedback was, as far as we are aware, first introduced in 1-bit SGD \cite{seide20141,strom2015scalable}. The algorithm 1-bit SGD is very similar to our {\esignsgd} algorithm, but tailored for the specific recurrent network studied there. \cite{wu2018error} analyze unbiased compressors with a form of error-feedback involving two additional hyper-parameters and restricted to quadtratic functions. Though not presented as such, the `momentum correction' used in \cite{lin2017deep} is a variant of error-feedback. However the error-feedback is not on the vanilla SGD algorithm, but on SGD {with momentum}. Recently, \citet{stich2018sparsified} conducted the first theoretical analysis of error-feedback for compressed gradient methods (they call it `memory') in the strongly convex case. Our convergence results can be seen as an extension of theirs to the non-convex and weakly convex cases.

\paragraph{Generalization of deep learning methods.}
Deep networks are almost always over-parameterized and are known to be able to fit arbitrary data and always achieve zero training error \cite{zhang2016understanding}. This ability of deep networks to generalize well on real data, while simultaneously being able to fit arbitrary data has recently received a lot of attention (e.g. \citet{soudry2018implicit,dinh2017sharp,zhang2018study,arpit2017closer,kawaguchi2017generalization}).
{\signsgd} and {\adam} are empirically known to generalize worse than SGD \cite{wilson2017marginal,balles2017dissecting}. A number of recent papers try close this gap for {\adam}. \citet{luo2019adaptive} show that by bounding the adaptive step-sizes in {\adam} leads to closing the generalization gap. They require new hyper-parameters on top of {\adam} to adaptively tune these bounds on the step-sizes. \citet{chen2019padam} interpolate between SGD and {\adam} using a new hyper-parameter $p$ and show that tuning this can recover performance of SGD. \citet{zaheer2018adaptive} introduce a new adaptive algorithm which is closer to Adagrad \cite{duchi2011adaptive}. Similarly, well-tuned {\adam} (where all the hyper-parameters and not just the learning rate are tuned) is also known to close the generalization gap \cite{Gugger2018AdamW}. In all of these algorithms, new hyper-parameters are introduced which essentially control the effect of the adaptivity. Thus they require additional tuning while the improvement upon traditional SGD is questionable. We are not aware of other work bridging the generalization gap in sign-based methods.


\section{Counterexamples for SignSGD}
In this section we study the limitations of {\signsgd}.
Under benign conditions---for example if i) the function $f$ is smooth, and ii) the stochastic noise is gaussian or an extemely large batch-size is used (equal to the total number of iterations)---the algorithm can be shown to converge \cite{bernstein2018signsgd,bernstein2019iclr}. However, we show that {\signsgd} does not converge under more standard assumptions. We demonstrate this first on a few pedagogic examples and later also for realistic and general sum-structured loss functions.
If we use a fixed step-size $\gamma \geq 0$%
, {\signsgd} does not converge even for simple one-dimensional linear functions.
\paragraph{Counterexample 1.}
For $x \in \R$ consider the constrained problem\vspace{-1mm}
\[
  \min_{x \in [-1,1]} \left[ f(x) := \tfrac{1}{4}x \right] \,,
  \]
with minimum at $x^\star =-1$. Assume stochastic gradients are given as (note that $f(x) = \tfrac{1}{4}(4x - x - x - x)$)
\begin{align*}
  g = \begin{cases}
  4, &\text{ with prob. } \tfrac{1}{4}\\
  -1, &\text{ with prob. } \tfrac{3}{4}
  \end{cases} & &\text{with} & & \expect[g] = \nabla f(x)\,.
\end{align*}
For SGD with any step-size $\gamma$,\vspace{-2mm}
\[
  \expect_t[f(x_{t+1})] = \frac{1}{4}\rbr*{x_t - \gamma \expect[g]} = f(x_t) - \frac{\gamma}{16}\,.
\]
On the other hand, for {\signsgd} with any fixed $\gamma$,
\[
  \expect_t[f(x_{t+1})] = \frac{1}{4}\rbr*{x_t - \gamma \expect[\sign{g}]} = f(x_t) + \frac{\gamma}{8}\,,
\]
i.e. the objective function increases in expectation for $\gamma \geq 0$.
\begin{remark}
  In the above example, we exploit  that the $\sign$ operator loses track of the magnitude of the stochastic gradient. Also note that our noise is bimodal. The counter-examples for the convergence of {\adam} \cite{reddi2018convergence,luo2019adaptive} also use similar ideas. Such examples were previouslyt noted for {\signsgd} by \cite{bernstein2019iclr}.
\end{remark}

In the example above the step-size $\gamma$ was fixed. However increasing batch-size or tuning the step-size may still allow convergence. Next we show that even with adaptive step-sizes (e.g. decreasing, or adaptively chosen optimal step-sizes) {\signsgd} does not converge. This even holds if the full (sub)-gradient is available (non-stochastic case).

\paragraph{Counterexample 2.}
\begin{figure}[t]
  \begin{center}\hspace{-13mm}
    \def\svgwidth{0.35\columnwidth}
    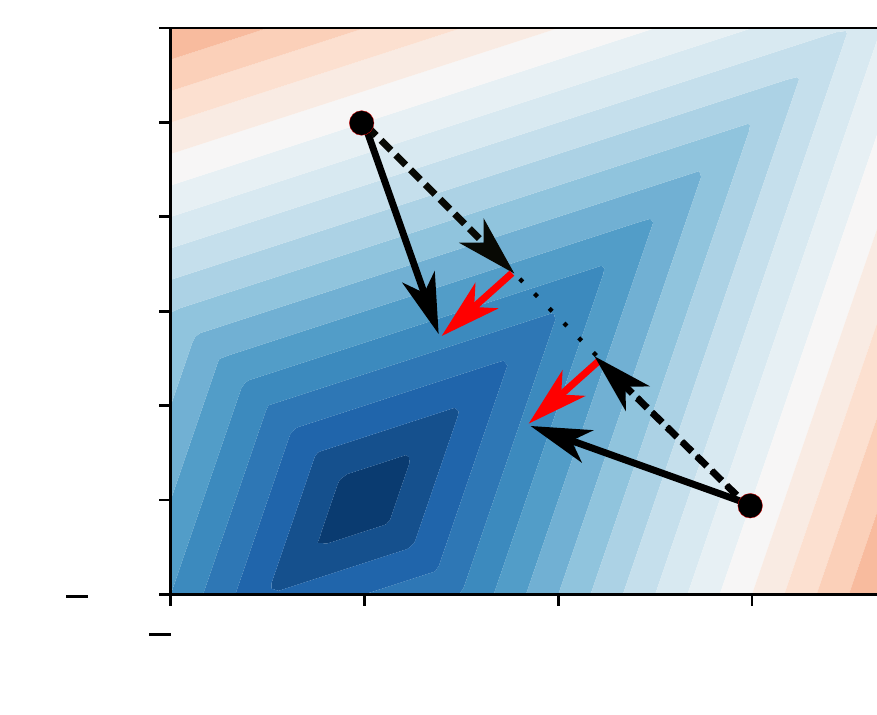
  \end{center}\vspace*{-0.5cm}
  \caption{The gradients $\gg$ (in solid black), signed gradient direction $\ss = \protect\sign{\gg}$ (in dashed black), and the error $\ee$ (in red) are plotted for $\epsilon = 0.5$. {\signsgd} moves only along $\ss = \pm (1,-1)$ while the error $\ee$ is ignored.}
  \label{fig:counter-cases}\vspace*{-4mm}
\end{figure}
For $\xx \in \R^2$ consider the following non-smooth convex problem with $\xx^\star = (0, 0)^\top$:\vspace{-1mm}
\[
 \min_{\xx \in \R^2}  \Big[ f(\xx) := \epsilon \abs{x_1 + x_2} + \abs{x_1 - x_2} \Big] \,,
\]
for parameter $0 < \epsilon < 1$ and subgradient
\[
\gg(\xx) = \sign(x_1 + x_2) \cdot \epsilon \begin{pmatrix} 1 \\ 1 \end{pmatrix} + \sign(x_1-x_2) \begin{pmatrix} 1 \\ -1 \end{pmatrix}\,.
\]
See Fig. \ref{fig:counter-cases}. The iterates of {\signsgd} started at $\xx_0 = (1,1)^\top$ lie along the line $x_1 + x_2 = 2$.
Note that for any $\xx$ s.t. $x_1 + x_2 >0$,  $\sign{\gg(\xx)} \! = \! \pm (1,-1)^\top$, and hence $x_1 + x_2$ remains constant among the iterations of {\signsgd}.
Consequently, for any step-size sequence $\gamma_t$, $f(\xx_t) \geq f(\xx_0)$.
\begin{remark}
In this example, we exploit the fact that the sign operator is a biased approximation of the gradient---it consistently ignores the direction $\ee = \epsilon(1,1)^\top$ (see Fig \ref{fig:counter-cases}). Tuning the step-size would not help either.
\end{remark}

One might wonder if the smooth-case is easier. Unfortunately, the previous example can easily be extended to show that {\signsgd} with stochastic gradients may not converge even for smooth functions.
\paragraph{Counterexample 3.}
For $\xx \in \R^2$ consider the following least-squares problem with
$\xx^\star = (0,0)^\top$:
\[
  \min_{\xx \in \R^2} \left[ f(\xx) := (\lin{\aa_1}{\xx})^2 + (\lin{\aa_2}{\xx})^2\right]\,, \text{ where}\vspace{-1mm}
\]
\[ \aa_{1,2} := \pm (1, -1) + \epsilon(1,1)\,,
\]
for parameter $0 < \epsilon < 1$ and stochastic gradient $\gg(\xx)= \nabla_{\xx} (\lin{\aa_1}{\xx})^2$ with prob.\ $\tfrac{1}{2}$ and $\gg(\xx) = \nabla_{\xx} (\lin{\aa_2}{\xx})^2$ with prob.\ $\tfrac{1}{2}$. The stochastic gradient is then either $e \aa_1$ or $e \aa_2$ for some scalar $e$.
Exactly as in the non-smooth case, for $\xx_0 = (1,1)^\top$, the sign of the gradient $\sign{\gg} = \pm (1, -1)$. Hence {\signsgd} with any step-size sequence remains stuck along the line $x_1 + x_2 = 2$ and $f(\xx_t) \geq f(\xx_0)$ a.s.

We can generalize the above counter-example to arbitrary dimensions and loss functions.
\begin{theorem}\label{thm:counter}
  Suppose that scalar loss functions $\{l_i \colon \R \to \R \}_{i=1}^n$ and data-points $\{\aa_i\}_{i=1}^n \in \real^d$ for $d \geq 2$ satisfy: i) $f(\xx) := \sum_{i=1}^n l_i(\lin{\aa_i}{\xx})$ has a unique optimum at $\xx^\star$, and ii) there exists $\ss \in \{-1,1\}^d$ such that $\sign{\aa_i} = \pm \ss$ for all $i$. Then {\signsgd} with batch-size 1 and stochastic gradients $g(\xx) = \nabla_{\xx} l_i(\lin{\aa_i}{\xx})$ for $i$ chosen uniformly at random does not converge to $\xx^\star$ a.s. for any adaptive sequence of step-sizes, even with random initialization.
\end{theorem}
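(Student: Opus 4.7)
The plan is to show that every iterate of \signsgd\ is confined to a fixed one-dimensional affine line whose direction is $\ss$, and that the unique optimum $\xx^\star$ almost surely fails to lie on this line when initialization is drawn from a continuous distribution. This traps the trajectory at a positive distance from $\xx^\star$ and rules out convergence for any adaptive step-size rule and any realization of the stochastic sample sequence.

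First I would analyze the sign of an arbitrary stochastic gradient. By the chain rule $\gg_i(\xx) = l_i'(\lin{\aa_i}{\xx})\, \aa_i$, and assumption (ii) forces each $\aa_i$ to have only nonzero coordinates with $\sign{\aa_i} = \sigma_i\, \ss$ for some $\sigma_i \in \{-1,+1\}$. Hence
\[
\sign{\gg_i(\xx)} \;=\; \sigma_i \cdot \sign{l_i'(\lin{\aa_i}{\xx})} \cdot \ss,
\]
which is always a scalar multiple of $\ss$, under any convention $\sign(0) \in \{-1,0,+1\}$; the value at zero is harmless because when $l_i'(\lin{\aa_i}{\xx}) = 0$ the gradient itself vanishes and no update occurs. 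A one-step induction then yields $\xx_t - \xx_0 \in \mathrm{span}(\ss)$ for every $t$, since the update $\xx_{t+1} = \xx_t - \gamma_t \sign{\gg_{i_t}(\xx_t)}$ merely appends another scalar multiple of $\ss$. Adaptivity of $\gamma_t$ is irrelevant here because the argument constrains only the direction of the update, never its magnitude.

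Finally, I would invoke random initialization. The set of $\xx_0 \in \mathbb{R}^d$ for which $\xx^\star$ lies on the affine line $L(\xx_0) := \xx_0 + \mathbb{R}\,\ss$ is itself a one-dimensional line in $\mathbb{R}^d$, which has Lebesgue measure zero when $d \geq 2$. Therefore under any initialization distribution absolutely continuous with respect to Lebesgue measure, with probability one $\mathrm{dist}(\xx^\star, L(\xx_0)) > 0$, and since every iterate lies on $L(\xx_0)$, the norm $\|\xx_t - \xx^\star\|$ is bounded below by this positive constant for all $t$, precluding convergence to $\xx^\star$. The only delicate point—the sign-of-zero convention—is handled by the vanishing-gradient observation above; the induction, the measure-zero argument for a one-dimensional line in $\mathbb{R}^d$, and immunity to arbitrary step-size choices are all routine.
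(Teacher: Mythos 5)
Your proof is correct and is essentially the paper's own argument: both show $\sign{\gg_t}=\pm\ss$, hence every iterate stays on the line $\xx_0+\R\,\ss$, and then note that the set of initializations for which $\xx^\star$ lies on this line has Lebesgue measure zero in $\R^d$ for $d\geq 2$. Your extra care with the $\sign(0)$ convention and the explicit distance lower bound are harmless refinements of the same route.
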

%


\section{Convergence of Compressed Methods}
We show the rather surprising result that incorporating error-feedback is sufficient to ensure that the algorithm converges at a rate which matches that of SGD. In this section we consider a general gradient compression scheme.

\vspace{5mm} 

\begin{algorithm}
	\caption{{\ecsgd} (Compr. SGD with Error-Feedback)\!\!\!\!}
		\label{alg:ecsgd}
	\begin{algorithmic}[1]
		\State \textbf{Input:} learning rate $\gamma$, compressor $\cC(\cdot)$,  $\xx_0 \in \R^d$
		\State \textbf{Initialize:} $\ee_0=\0 \in \R^d$
 	\For{$t = 0,\dots, T-1$}
    \State $\gg_t := \text{stochasticGradient}(\xx_t)$
    \State $\pp_t := \gamma \gg_t + \ee_t$ \hfill $\triangleright$ {\it error correction}
    \State $\Delta_{t} := \cC(\pp_t)$ \hfill $\triangleright$ {\it compression}
    \State $\xx_{t+1} := \xx_t - \Delta_t$ \hfill $\triangleright$ {\it update iterate}
    \State $\ee_{t+1} := \pp_t - \Delta_t $  \hfill $\triangleright$ {\it update residual error}
    \EndFor
    \end{algorithmic}
\end{algorithm}

\pagebreak

We generalize the notion of a compressor from \cite{stich2018sparsified}.
\begin{assumption}[Compressor]\label{asm:compressor}
An operator $\cC: \real^d \rightarrow \real^d$ is a $\delta$-approximate compressor over $\cQ$ for $\delta \in (0,1]$ if
\[
  \norm{\cC(\xx) - \xx}^2_2 \leq (1 - \delta)\norm{\xx}^2_2,\ \forall \xx \in \cQ\,.
\]
\end{assumption}
Note that $\delta = 1$ implies that $\cC(\xx) = \xx$. Examples of compressors include: i) the $\sign$ operator, ii) top-$k$ which selects $k$ coordinates with the largest absolute value while zero-ing out the rest \cite{lin2017deep,stich2018sparsified}, iii) $k$-PCA which approximates a matrix $\mX$ with its top $k$ eigenvectors \cite{wang2018atomo}.
Randomized compressors satisfying the assumption in expectation are also allowed.

We now state a key lemma that shows that the residual errors maintained in Algorithm~\ref{alg:ecsgd} do not accumulate too much.
\begin{lemma}[Error is bounded]\label{lem:bounded-error}
  Assume that $\expect[\norm{\gg_t}^2] \leq \sigma^2$ for all $t \geq 0$. Then at any iteration $t$ of {\ecsgd}, the norm of the error $\ee_t$ in Algorithm~\ref{alg:ecsgd} is bounded:\vspace{-1mm}
    \begin{align*}
     \expect \norm{\ee_t}^2_2 \leq \frac{4(1 - \delta) \gamma^2 \sigma^2}{\delta^2}\,, & &\forall t \geq 0\,.
    \end{align*}
\end{lemma}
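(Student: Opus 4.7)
The plan is to set up a one-step recursion for $\expect\|\ee_{t+1}\|_2^2$ in terms of $\expect\|\ee_t\|_2^2$ and $\gamma^2\sigma^2$, and then close the induction with the claimed steady-state bound.

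First I would observe that by the algorithm's definition, $\ee_{t+1}=\pp_t-\Delta_t=\pp_t-\cC(\pp_t)$, so Assumption~\ref{asm:compressor} directly gives
\[
\|\ee_{t+1}\|_2^2 \leq (1-\delta)\|\pp_t\|_2^2 = (1-\delta)\|\gamma\gg_t+\ee_t\|_2^2.
\]
Next, I would apply Young's inequality $\|\aa+\bb\|_2^2\leq (1+\beta)\|\aa\|_2^2+(1+1/\beta)\|\bb\|_2^2$ for a parameter $\beta>0$ to be chosen, giving
\[
\|\ee_{t+1}\|_2^2\leq(1-\delta)(1+\beta)\|\ee_t\|_2^2+(1-\delta)(1+1/\beta)\gamma^2\|\gg_t\|_2^2.
\]
Taking expectations and using $\expect\|\gg_t\|_2^2\leq\sigma^2$ yields a scalar recursion of the form $\expect\|\ee_{t+1}\|_2^2\leq \alpha\,\expect\|\ee_t\|_2^2+K\gamma^2\sigma^2$ with $\alpha=(1-\delta)(1+\beta)$ and $K=(1-\delta)(1+1/\beta)$.

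The only slightly delicate step is choosing $\beta$ so that the resulting steady-state bound $\tfrac{K\gamma^2\sigma^2}{1-\alpha}$ matches the claimed $\tfrac{4(1-\delta)\gamma^2\sigma^2}{\delta^2}$. I would take $\beta=\delta/2$, which gives $1-\alpha=\delta(1+\delta)/2$ and $K=(1-\delta)(\delta+2)/\delta$, and then use $(\delta+2)/(1+\delta)\leq 2$ to bound $\tfrac{K}{1-\alpha}\leq\tfrac{4(1-\delta)}{\delta^2}$. This is the main (minor) obstacle: the "right" $\beta$ is not immediately obvious, and a careless choice (e.g. $\beta=\delta$) produces the correct order but an ugly constant that does not match the lemma statement.

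Finally, I would conclude by induction on $t$. The base case $\ee_0=\0$ is trivial. For the inductive step, assuming $\expect\|\ee_t\|_2^2\leq B:=\tfrac{4(1-\delta)\gamma^2\sigma^2}{\delta^2}$, the recursion gives
\[
\expect\|\ee_{t+1}\|_2^2\leq (1-\tfrac{\delta}{2}(1+\delta))\,B + (1-\delta)(1+2/\delta)\gamma^2\sigma^2,
\]
and plugging in the value of $B$ reduces the required inequality to $(1-\delta)(\delta+2)\leq 2(1-\delta)(1+\delta)$, i.e.\ $\delta\geq 0$, which holds. This closes the induction and gives the bound for all $t\geq 0$.
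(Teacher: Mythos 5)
Your proposal is correct and follows essentially the same route as the paper's proof: apply the $\delta$-approximate compressor property to $\ee_{t+1}=\pp_t-\cC(\pp_t)$, use Young's inequality to obtain a contracting recursion $\expect\norm{\ee_{t+1}}^2\leq(1-\delta)(1+\beta)\expect\norm{\ee_t}^2+(1-\delta)(1+1/\beta)\gamma^2\sigma^2$, and then bound the resulting fixed point. The only cosmetic differences are that you close the argument by induction on the steady-state bound (with $\beta=\delta/2$) whereas the paper unrolls the recursion into a geometric series (with $\eta=\delta/(2(1-\delta))$); both give the stated constant $4(1-\delta)/\delta^2$.
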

If $\delta = 1$, then $\norm{\ee_t} = 0$ and the error is zero as expected.

We employ standard assumptions of smoothness of the loss function and the variance of the stochastic gradient.
\begin{assumption}[Smoothness]\label{asm:smoothness}
  A function $f \colon \R^d \to \R$ is $L$\nobreakdash-smooth if for all $\xx$, $\yy \in \R^d$ the following holds:\vspace{-2mm}
  \[
    \abs*{f(\yy) - \rbr*{f(\xx) + \lin*{\nabla f(\xx)}{\yy - \xx}}} \leq \frac{L}{2}\norm*{\yy - \xx}^2_2\,.
  \]
\end{assumption} %
\begin{assumption}[Moment bound]\label{asm:moment}
  For any $\xx$, our query for a stochastic gradient returns $\gg$ such that\vspace{-1mm}
  \[
    \expect[\gg] = \nabla f(\xx) \quad \text{and} \quad \expect \norm*{\gg}^2_2  \leq \sigma^2\,.\vspace{-1mm}
  \]
\end{assumption}
Given these assumptions, we can formally state our theorem followed by a sketch of the proof.
\begin{theorem}[Non-convex convergence of {\ecsgd}]\label{thm:ecsgd-non-convex}
Let $\{\xx_t\}_{t \geq 0}$ denote the iterates of Algorithm~\ref{alg:ecsgd} for any step-size $\gamma>0$. Under Assumptions~\ref{asm:compressor},~\ref{asm:smoothness},~and~\ref{asm:moment},
  \[
    \min_{t \in [T]}\expect\sbr{\norm{\nabla f(\xx_t)}^2} \leq \frac{2 f_0}{\gamma (T+1)} + \frac{\gamma L \sigma^2}{2}  + \frac{4 \gamma^2 L^2 \sigma^2(1 - \delta)}{\delta^2}  \,,
  \]
  with $f_0 := f(\xx_0) - f^\star$.
\end{theorem}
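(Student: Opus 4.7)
The plan is to apply the standard ``virtual sequence'' (or perturbed-iterate) reduction: translate the error-feedback dynamics into a plain SGD recursion on a shifted sequence, and then pay for the shift via Lemma~\ref{lem:bounded-error}. Concretely, define the virtual iterate $\txx_t := \xx_t - \ee_t$. Using the updates in Algorithm~\ref{alg:ecsgd},
\[
\txx_{t+1} = \xx_{t+1} - \ee_{t+1} = (\xx_t - \Delta_t) - (\pp_t - \Delta_t) = \xx_t - \pp_t = \txx_t - \gamma \gg_t,
\]
so $\txx_t$ evolves exactly like vanilla SGD, driven by a stochastic gradient that is still unbiased at the \emph{true} iterate $\xx_t$. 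Note $\txx_0 = \xx_0$ (since $\ee_0 = \0$) and $f(\txx_{T+1}) \geq f^\star$, which is what will feed into the telescoping at the end.

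Next I would apply the $L$-smoothness descent lemma (Assumption~\ref{asm:smoothness}) to $\txx_{t+1}$ and $\txx_t$, obtaining
\[
f(\txx_{t+1}) \leq f(\txx_t) - \gamma \lin{\nabla f(\txx_t)}{\gg_t} + \tfrac{L\gamma^2}{2}\norm{\gg_t}^2.
\]
Taking expectation and using $\expect[\gg_t \mid \xx_t] = \nabla f(\xx_t)$ together with Assumption~\ref{asm:moment} produces a noise term of $L\gamma^2 \sigma^2 /2$ per step. The nontrivial piece is the cross term $\lin{\nabla f(\txx_t)}{\nabla f(\xx_t)}$: the descent lemma hands us the gradient at $\txx_t$, whereas unbiasedness lives at $\xx_t$. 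I would resolve this via the polarization identity $\lin{a}{b} = \tfrac{1}{2}(\norm{a}^2 + \norm{b}^2 - \norm{a-b}^2)$, and then use smoothness a second time to bound $\norm{\nabla f(\txx_t) - \nabla f(\xx_t)} \leq L \norm{\ee_t}$. After rearranging, this yields a per-step inequality of the form
\[
\tfrac{\gamma}{2}\expect\norm{\nabla f(\xx_t)}^2 \leq \expect\sbr{f(\txx_t) - f(\txx_{t+1})} + \tfrac{\gamma L^2}{2}\expect\norm{\ee_t}^2 + \tfrac{L\gamma^2 \sigma^2}{2}.
\]

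To finish, I would sum this inequality from $t=0$ to $T$, telescoping the function-value differences (using $f(\txx_0) = f(\xx_0)$ and $f(\txx_{T+1}) \geq f^\star$ to produce the $f_0$ term) and invoking Lemma~\ref{lem:bounded-error} uniformly to replace each $\expect\norm{\ee_t}^2$ by $4(1-\delta)\gamma^2 \sigma^2/\delta^2$. Dividing by $\gamma(T+1)/2$ and lower bounding the average by $\min_{t \in [T]} \expect\norm{\nabla f(\xx_t)}^2$ gives the stated rate. The main obstacle is precisely this cross term: because the descent step is measured at the \emph{virtual} iterate while the unbiased gradient lives at the \emph{true} iterate, a gradient-mismatch penalty of $L^2\norm{\ee_t}^2$ is unavoidable. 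The whole argument closes only because Lemma~\ref{lem:bounded-error} certifies that this penalty is $O(\gamma^2)$, so after an extra factor of $\gamma$ from the step size it contributes only at the $O(\gamma^3)$ scale, making it lower-order than the dominant noise term $\gamma L \sigma^2$ for any reasonable choice of learning rate.
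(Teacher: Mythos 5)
Your proof is correct and follows essentially the same route as the paper: the virtual iterate $\txx_t = \xx_t - \ee_t$, the smoothness descent step, bounding the gradient mismatch by $L\norm{\ee_t}$, and closing with Lemma~\ref{lem:bounded-error} plus telescoping; your polarization step is just the paper's Young-type inequality with $\rho=1$, which is exactly the choice the paper makes. The only caveat is that, like the paper's own derivation at $\rho=1$, your per-step inequality yields a middle term $\gamma L \sigma^2$ rather than the stated $\gamma L \sigma^2/2$ --- a cosmetic factor-of-two slack that the paper itself carries.
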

\begin{proof}[Proof Sketch]
  Intuitively, the condition that $\cC(\cdot)$ is a $\delta$-approximate compressor implies that at each iteration a $\delta$-fraction of the gradient information is sent. The rest is added to $\ee_t$ to be transmitted later. Eventually, all the gradient information is transmitted---albeit with a delay which depends on~$\delta$. Thus {\ecsgd} can intuitively be viewed as a delayed gradient method. If the function is smooth, the gradient does not change quickly and so the delay does not significantly matter.

  More formally, consider the error-corrected sequence $\txx_t$ which represents~$\xx_t$ with the `delayed' information added:\vspace{-1mm}
  \[
  \txx_t := \xx_t - \ee_t\,.\vspace{-1mm}
  \]
  It satisfies the recurrence\vspace{-1mm}
  \[
    \txx_{t+1} = \xx_{t} - \ee_{t+1} - \cC(\pp_t) = \xx_t - \pp_t = \txx_t - \gamma \gg_t\,.
  \]
  If $\xx_t$ was exactly equal to~$\txx_t$ (i.e. there was zero `delay'), then we could proceed with the standard proof of SGD. We instead rely on Lemma \ref{lem:bounded-error} which shows $\txx_t \approx \xx_t$ and on the smoothness of $f$ which shows $\nabla f(\xx_t) \approx \nabla f(\txx_t)$.
\end{proof}

\begin{remark}\label{rem:ecsgd-non-convex-rate}
If we substitute $\gamma := \frac{1}{\sqrt{T+1}}$ in Theorem \ref{thm:ecsgd-non-convex}, we get\vspace{-1mm}
\[
  \min_{t \in [T]}\expect\sbr{\norm{\nabla f(\xx_t)}^2} \leq \frac{4(f(\xx_0) - f^\star)  + L \sigma^2}{2\sqrt{T+1}}  + \frac{4 L^2 \sigma^2(1 - \delta)}{\delta^2 (T+1)}
\]
In the above rate, the compression factor $\delta$ only appears in the higher order $\mathcal{O}(1/T)$ term. For comparison, SGD under the exact same assumptions achieves
\[
  \min_{t \in [T]}\expect\sbr{\norm{\nabla f(\xx_t)}^2} \leq \frac{2(f(\xx_0) - f^\star)  + L \sigma^2}{2\sqrt{T+1}} \,.
\]
  This means that after $T \geq \cO(1/\delta^2)$ iterations the second term becomes negligible and the rate of convergence catches up with full SGD---this is usually true after just the first few epochs. Thus we prove that compressing the gradient does not change the asymptotic rate of SGD.\footnote{The astute reader would have observed that the asymptotic rate for {\ecsgd} is in fact 2 times slower than SGD. This is just for simplicity of presentation and can easily be fixed with a tighter analysis.}
\end{remark}
\begin{remark}\label{rem:unbiased-estimator}
  The use of error-feedback was motivated by our counter-examples for biased compression schemes. However our rates show that even if using an unbiased compression (e.g. QSGD \cite{alistarh2017quantized}), using error-feedback gives significantly better rates. Suppose we are given an unbiased compressor $cU(\cdot)$ such that
  $\expect[\cU(\xx)] = \xx$ and $\expect\sbr*{\norm{\cU(\xx)}_2^2} \leq k \norm{\xx}^2$. Then without feedback, using standard analysis (e.g. \cite{alistarh2017quantized}) the algorithm converges $k$ times slower:
  \[
    \min_{t \in [T]}\expect\sbr{\norm{\nabla f(\xx_t)}^2} \leq \frac{(f(\xx_0) - f^\star)  + L k \sigma^2}{2\sqrt{T+1}} \,.
  \]
  Instead, if we use $\cC(\xx) = \frac{1}{k}\cU(\xx)$ with error-feedback, we would achieve\vspace{-2mm}
  \[
    \min_{t \in [T]}\expect\sbr{\norm{\nabla f(\xx_t)}^2} \leq \frac{4(f(\xx_0) - f^\star)  + L\sigma^2}{2\sqrt{T+1}} + \frac{2 L^2 \sigma^2 k^2}{T+1} \,,
  \]
  thereby pushing the dependence on $k$ into the higher order $\mathcal{O}(1/T)$ term.
\end{remark}

Our counter-examples showed that biased compressors may not converge for non-smooth functions. Below we prove that adding error-feedback ensures convergence under standard assumptions even for non-smooth functions.
\begin{theorem}[Non-smooth convergence of {\ecsgd}]\label{thm:ecsgd-convex}
Let $\{\xx_t\}_{t \geq 0}$ denote the iterates of Algorithm~\ref{alg:ecsgd} for any step-size $\gamma >0$ and define $\bar\xx_t = \frac{1}{T}\sum_{t=0}^{T}\xx_t$. Given that $f$ is convex and Assumptions~\ref{asm:compressor}~and~\ref{asm:moment} hold,\vspace{-1mm}
  \[
    \expect[f(\bar\xx_t)] - f^\star \leq \frac{\norm{\xx_0 - \xx^\star}^2}{2\gamma (T+1)} + \gamma \sigma^2\rbr*{\frac{1}{2} + \frac{2\sqrt{1 - \delta}}{\delta}} \,.
  \]
\end{theorem}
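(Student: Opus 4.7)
The plan is to mirror the virtual-iterate strategy used for Theorem~\ref{thm:ecsgd-non-convex}, but replace the smoothness-based descent step with the standard convex analysis on $\norm{\txx_t-\xx^\star}^2$. Define $\txx_t := \xx_t - \ee_t$ so that, as observed in the proof sketch of Theorem~\ref{thm:ecsgd-non-convex}, the clean recurrence $\txx_{t+1} = \txx_t - \gamma \gg_t$ holds. In other words, the virtual iterates follow an exact SGD trajectory on $f$, and the whole argument reduces to controlling the gap between $\xx_t$ (where $\nabla f$ is evaluated) and $\txx_t$ (which obeys the clean recurrence). Note that smoothness is not assumed and should not enter.

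Expanding $\norm{\txx_{t+1}-\xx^\star}^2$ and taking the conditional expectation using $\expect[\gg_t\mid \xx_t]=\nabla f(\xx_t)$ together with Assumption~\ref{asm:moment} gives
\[
\expect\norm{\txx_{t+1}-\xx^\star}^2 \leq \expect\norm{\txx_t-\xx^\star}^2 - 2\gamma\,\expect\lin{\nabla f(\xx_t)}{\txx_t-\xx^\star} + \gamma^2\sigma^2.
\]
I would then split the inner product as $\lin{\nabla f(\xx_t)}{\xx_t-\xx^\star} + \lin{\nabla f(\xx_t)}{-\ee_t}$, bound the first piece below by $f(\xx_t)-f^\star$ using convexity, and control the cross term by Cauchy--Schwarz as $\norm{\nabla f(\xx_t)}\,\norm{\ee_t}$. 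Since Jensen applied to Assumption~\ref{asm:moment} gives $\norm{\nabla f(\xx_t)} \leq \sigma$, only $\expect\norm{\ee_t}$ remains to be controlled.

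This is precisely where Lemma~\ref{lem:bounded-error} enters: combined with Jensen's inequality it yields $\expect\norm{\ee_t} \leq 2\gamma\sigma\sqrt{1-\delta}/\delta$. Telescoping the one-step inequality from $t=0$ to $T$ (using $\txx_0=\xx_0$), dividing by $2\gamma(T+1)$, and invoking convexity of $f$ to replace $\frac{1}{T+1}\sum_t\expect f(\xx_t)$ by $\expect f(\bar\xx_T)$ would give exactly the claimed bound: the term $\norm{\xx_0-\xx^\star}^2/(2\gamma(T+1))$ comes from the telescoping initial distance, the $\gamma\sigma^2/2$ from the $\gamma^2\sigma^2$ variance contribution, and the $2\gamma\sigma^2\sqrt{1-\delta}/\delta$ from the error cross-term.

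I do not anticipate a serious obstacle: once the virtual-iterate identity is in place, the argument is the textbook SGD proof plus a single perturbation term, and the perturbation is exactly what Lemma~\ref{lem:bounded-error} was designed to handle. The only mildly delicate point is passing from $\expect\norm{\ee_t}^2$ to $\expect\norm{\ee_t}$ via Jensen, since the cross term is linear rather than quadratic in $\norm{\ee_t}$; this is why the dependence here is $\sqrt{1-\delta}/\delta$ rather than the $(1-\delta)/\delta^2$ appearing in the smooth non-convex rate.
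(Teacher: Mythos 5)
Your proposal is correct and follows essentially the same route as the paper's proof: the virtual iterate $\txx_t=\xx_t-\ee_t$ with $\txx_{t+1}=\txx_t-\gamma\gg_t$, the standard expansion of $\norm{\txx_{t+1}-\xx^\star}^2$, convexity plus Cauchy--Schwarz with the loose bound $\norm{\partial f(\xx_t)}\leq\sigma$, and Lemma~\ref{lem:bounded-error} (via Jensen) for the cross term, then telescoping and convexity to pass to $\bar\xx_T$. No gaps.
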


\begin{remark}\label{rem:ecgsd-non-smooth-rate}
  By picking the optimal $\gamma = \mathcal{O}(1/\sqrt{T})$, we see that\vspace{-3mm}
  \[
    \expect[f(\bar\xx_t)] - f^\star \leq \frac{\norm{\xx_0 - \xx^\star}\sigma}{2\sqrt{T+1}}\sqrt{1 + \frac{4\sqrt{1 - \delta}}{\delta}}\,.
  \]
  For comparison, the rate of convergence under the same assumptions for SGD is
  \[
    \expect[f(\bar\xx_t)] - f^\star \leq \frac{\norm{\xx_0 - \xx^\star}\sigma}{2\sqrt{T+1}} \,.
  \]
  For non-smooth functions, unlike in the smooth case, the compression quality $\delta$ appears directly in the leading term of the convergence rate. This is to be expected since we can no longer assume that $\nabla f(\txx_t) \approx \nabla f(\xx_t)$, which formed the crux of our argument for the smooth case.
\end{remark}

\begin{remark}
  Consider the top-1 compressor which just picks the coordinate with the largest absolute value, and zeroes out everything else. It is obvious that top-1 is a $\frac{1}{d}$-approximate compressor (cf.~\citep[Lemma A.1]{stich2018sparsified}). Running {\ecsgd} with $\cC$ as \mbox{top-1} results in a greedy coordinate algorithm. This is the first result we are aware which shows the convergence of a greedy-coordinate type algorithm on non-smooth functions.
\end{remark}

If the function is both smooth and convex, we can fall back to the analysis of \cite{stich2018sparsified} and so we won't examine it in more detail here.

\begingroup
\subsection*{Convergence of {\esignsgd}}
What do our proven rates imply for {\esignsgd} (Algorithm~\ref{alg:esignsgd}), the method of our interest here?
\begin{lemma}[Compressed sign]
  The operator $\cC(\vv) := \frac{\norm{\vv}_1}{d} \sign{\vv}$ is a\vspace{-3mm}
  \[
  \phi(\vv) = \frac{\norm{\vv}_1^2}{d \norm{\vv}_2^2}
  \]
  compressor.
\end{lemma}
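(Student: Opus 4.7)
The plan is a direct expansion of the squared $\ell_2$ norm. First I would write
\[
\norm*{\cC(\vv) - \vv}_2^2 = \sum_{i=1}^d \rbr*{\frac{\norm{\vv}_1}{d}\sign(v_i) - v_i}^2,
\]
expand the square termwise to get $\tfrac{\norm{\vv}_1^2}{d^2} - \tfrac{2\norm{\vv}_1}{d}\sign(v_i)v_i + v_i^2$, and sum. The constant term contributes $d\cdot \tfrac{\norm{\vv}_1^2}{d^2} = \tfrac{\norm{\vv}_1^2}{d}$, the cross term uses $\sign(v_i)v_i = \abs{v_i}$ and contributes $-\tfrac{2\norm{\vv}_1}{d}\sum_i \abs{v_i} = -\tfrac{2\norm{\vv}_1^2}{d}$, and the square term contributes $\norm{\vv}_2^2$.

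Combining these three contributions yields the clean identity
\[
\norm*{\cC(\vv) - \vv}_2^2 \;=\; \norm{\vv}_2^2 - \frac{\norm{\vv}_1^2}{d} \;=\; \rbr*{1 - \frac{\norm{\vv}_1^2}{d\norm{\vv}_2^2}}\norm{\vv}_2^2 \;=\; \bigl(1 - \phi(\vv)\bigr)\norm{\vv}_2^2,
\]
which is exactly the compressor inequality from Assumption~\ref{asm:compressor} with parameter $\phi(\vv)$.

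Finally, I would verify that $\phi(\vv) \in (0,1]$ so that the statement fits the definition of a $\delta$-approximate compressor. This is immediate from Cauchy--Schwarz applied to $\vv$ and $\sign(\vv)$, which gives $\norm{\vv}_1 = \lin{\sign(\vv)}{\vv} \leq \sqrt{d}\,\norm{\vv}_2$, hence $\phi(\vv) \leq 1$; the reverse bound $\phi(\vv) \geq 1/d$ follows from $\norm{\vv}_1 \geq \norm{\vv}_2$, so $\phi(\vv)$ is a meaningful compression parameter for every nonzero $\vv$.

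There is no real obstacle here — the argument is a one-line algebraic identity once the square is expanded. The only care needed is the sign identity $\sign(v_i) v_i = \abs{v_i}$ (true for $v_i = 0$ under the convention $\sign(0)\cdot 0 = 0$), so that the cross term collapses to $\norm{\vv}_1^2$ exactly.
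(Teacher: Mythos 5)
Your proof is correct and is exactly the direct expansion one would give; the paper states this lemma without a written proof, treating precisely this computation as immediate. One cosmetic caveat: your claim that the constant term contributes $d \cdot \norm{\vv}_1^2/d^2$ implicitly uses $\sign{v_i}^2 = 1$ for every coordinate, whereas under the convention $\sign{0} = 0$ that term only shrinks for zero coordinates, so the exact identity becomes the inequality $\norm*{\cC(\vv) - \vv}_2^2 \leq \bigl(1 - \phi(\vv)\bigr)\norm{\vv}_2^2$, which is all the compressor definition requires.
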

\begin{figure}
\centering
    \includegraphics[width=0.34\columnwidth]{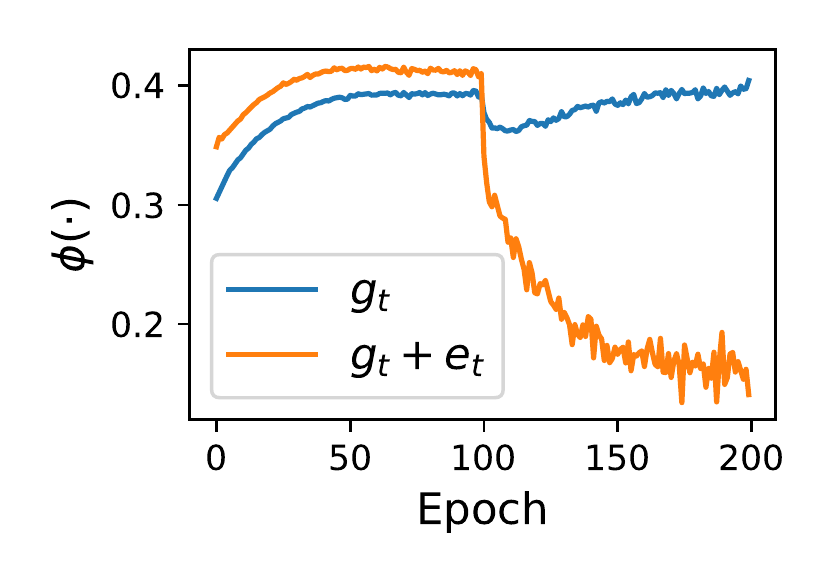}
  \caption{The density $\phi(\cdot)$ for the stochastic gradients $\gg_t$ and the error-corrected stochastic gradients $\gg_t + \ee_t$ for VGG19 on CIFAR10 and batchsize 128 (See Sec.~\ref{sec:experiments}). Minimum value of $\phi(\gg_t + \ee_t)$ is greater than $0.13$.}
  \label{fig:density}
\end{figure}
We refer to the quantity $\phi(\vv)$ as the density of $\vv$. If the vector $\vv$ had only one non-zero element, the value of $\delta$ for {\esignsgd} could be as bad as $1/d$. However, in deep learning the gradients are usually dense and hence $\phi(\vv)$ is much larger (see Fig. \ref{fig:density}). Note that for our convergence rates, it is not the density of the gradient $\gg_t$ which matters but the density of the error-corrected gradient~$\gg_t+\ee_t$.
\endgroup

%
\phantomsection \label{para:faster-than-SGD}
\paragraph{Faster convergence than SGD?}
\cite{kingma2014adam} and \cite{bernstein2018signsgd} note that different coordinates of the stochastic gradient $\gg$ may have different variances. In standard SGD, the learning rate $\gamma$ would be reduced to account for the \emph{maximum} of these coordinate-wise variances since otherwise the path might be dominated by the noise in these sparse coordinates. Instead, using coordinate-wise learning-rates like {\adam} does, or using only the coordinate-wise $\sign$ of $\gg$ as {\signsgd} does, might mitigate the effect of such `bad' coordinates by effectively scaling down the noisy coordinates. This is purported to be the reason why {\adam} and {\signsgd} can be faster than SGD on train dataset.

In {\esignsgd}, the noise from the `bad' coordinates gets accumulated in the error-term $\ee_t$ and is not forgotten or scaled down. Thus, if there are `bad' coordinates whose variance slows down convergence of SGD, {\esignsgd} should be similarly slow. Confirming this, in a toy experiment with sparse noise (Appendix \ref{subsec:sparse-noise}), SGD and {\esignsgd} converge at the same \emph{slower} rate, whereas {\signsgd} is faster. However, our real world experiments contradict this---even with the feedback, {\esignsgd} is consistently \emph{faster} than SGD, {\signsgd}, and {\signum} on training data. Thus the coordinate-wise variance adaption explanation proposed by \cite{bernstein2018signsgd,kingma2014adam} does not explain the faster convergence of {\esignsgd}, and is probably an incomplete explanation of why sign based methods or adaptive methods are faster than SGD!

\section{Generalization of SignSGD}
So far our discussion has mostly focused on the convergence of the methods i.e. their performance on training data. However for deep-learning, we actually care about their performance on test data i.e. their generalization.
It has been observed that the optimization algorithm being used significantly impacts the properties of the optima reached \cite{im2016empirical,li2018visualizing}. For instance, {\adam} and {\signsgd} are known to generalize poorly compared with SGD \cite{wilson2017marginal,balles2017dissecting}.%

The proposed explanation for this phenomenon is that in an over-parameterized setting, SGD reaches the `max-margin' solution wheras {\adam} and {\signsgd} do not \cite{zhang2016understanding,wilson2017marginal}, and \cite{balles2017dissecting}. As with the issues in convergence, the issues of {\signsgd} with generalization also turn out to be related to the biased nature of the $\sign$ operator. We explore how error-feedback may also alleviate the issues with generalization for any compression operator.
\subsection{Distance to gradient span}
Like \cite{zhang2016understanding,wilson2017marginal}, we consider an over-parameterized least-squares problem
\[
 \min_{\xx \in \R^d} \left[  f(\xx) := \norm*{\mA \xx - \yy}^2_2 \right] \,,
\]
where $\mA \in \real^{n \times d}$ for $d > n$ is the data matrix and $\yy \in \{-1,1 \}^n$ is the set of labels. The set of solutions $X^\star := \{\xx \colon f(\xx)=0\}$ of this problem forms a subspace in $\R^d$.
Of particular interest is the solution with smallest norm:\vspace{-1mm}
\[
\argmin_{\xx \colon  \in X^\star} \norm{\xx}^2 =\mA^{\dagger}\yy = \mA^\top \rbr*{\mA \mA^\top}^{-1} \yy\,,\vspace{-1mm}
\]
as this corresponds to the \emph{maximum margin} solution in the dual.

Maximizing margin is known to have a regularizing effect and is said to improve generalization \cite{valiant1984theory,cortes1995support}.

The key property that SGD (with or without momentum) trivially satisfies is that the iterates always lie in the linear span of the gradients.
\begin{lemma}\label{lem:span-pseudo-inverse}
  Given any over-parameterized least-squares problem, suppose that the iterates of the algorithm always lie in the linear span of the gradients and it converges to a 0 loss solution. This solution corresponds to the minimum norm/maximum margin solution.
\end{lemma}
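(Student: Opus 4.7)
The plan is to exploit the fact that for the least-squares objective, every gradient lies in a fixed proper subspace of $\R^d$, namely the row span of $\mA$. This subspace happens to be exactly the orthogonal complement of $\ker(\mA)$, which characterizes the minimum-norm solution among all zero-loss solutions. Concretely, I would carry out the argument in three steps, as follows.

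First, compute $\nabla f(\xx) = 2\mA^\top (\mA\xx - \yy)$, which manifestly belongs to $\mathrm{range}(\mA^\top)$ for every $\xx \in \R^d$. By the hypothesis that every iterate $\xx_t$ lies in the linear span of the gradients $\{\nabla f(\xx_0), \nabla f(\xx_1), \dots \}$ (note $\xx_0 = \mathbf{0}$ can be assumed WLOG, or else one can absorb $\xx_0$ into the span provided $\xx_0 \in \mathrm{range}(\mA^\top)$), induction gives $\xx_t \in \mathrm{range}(\mA^\top)$ for all $t$. Since $\mathrm{range}(\mA^\top)$ is a closed subspace, the limit $\xx^\star := \lim_{t \to \infty} \xx_t$ also lies in $\mathrm{range}(\mA^\top)$.

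Second, use the convergence-to-zero-loss assumption, which means $\mA \xx^\star = \yy$. The general solution set of $\mA \xx = \yy$ is the affine subspace $\mA^{\dagger}\yy + \ker(\mA)$. By the fundamental subspace decomposition, $\R^d = \mathrm{range}(\mA^\top) \oplus \ker(\mA)$ is orthogonal. Therefore the unique zero-loss point lying in $\mathrm{range}(\mA^\top)$ is obtained by taking the $\ker(\mA)$ component to be zero, i.e.\ $\xx^\star = \mA^\dagger \yy = \mA^\top(\mA\mA^\top)^{-1}\yy$.

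Third, observe that among all $\xx$ satisfying $\mA \xx = \yy$, the Pythagorean identity $\|\mA^\dagger \yy + \ww\|^2 = \|\mA^\dagger \yy\|^2 + \|\ww\|^2$ for $\ww \in \ker(\mA)$ (using orthogonality of the two subspaces) shows that $\mA^\dagger \yy$ is strictly minimum-norm, hence the maximum-margin dual solution. This completes the proof.

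I do not expect a serious obstacle here; the argument is purely linear-algebraic and relies only on the two facts that (i) gradients of a quadratic live in the row span, and (ii) the row span is orthogonal to the kernel. The only subtle point is that one must ensure the initialization $\xx_0$ also lies in $\mathrm{range}(\mA^\top)$ so that the span-preservation argument begins correctly; the standard choice $\xx_0 = \mathbf{0}$ (used implicitly by SGD from scratch) makes this automatic, and should be explicitly noted.
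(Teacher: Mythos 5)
Your proposal is correct and follows essentially the same route as the paper: both arguments reduce to the observation that the iterates (and hence the limit) lie in $\operatorname{range}(\mA^\top)$, and that the zero-loss point in that subspace is unique and equals $\mA^\top(\mA\mA^\top)^{-1}\yy$ --- the paper gets uniqueness by writing $\xx_t = \mA^\top\balpha_t$ and inverting $\mA\mA^\top$ (assuming full row rank WLOG), while you get it via the orthogonal decomposition $\R^d = \operatorname{range}(\mA^\top)\oplus\ker(\mA)$. Your explicit Pythagorean verification of the minimum-norm property is a small addition the paper leaves to its preceding discussion, but it does not change the argument.
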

If we instead use a biased compressor (e.g. {\signsgd}), it is clear that the iterate may not lie in the span of the gradients. In fact it is easy to construct examples where this happens for {\signsgd} \cite{balles2017dissecting}, as well as top-$k$ sparsification \cite{gunasekar2018characterizing}, perhaps explaining the poor generalization of these schemes.
Error-feedback is able to overcome this issue as well.
\begin{theorem}\label{thm:linear-span}
  Suppose that we run Algorithm \ref{alg:ecsgd} for $t$ iterations starting from $\xx_0 = \0$. Let $\mG_t = [g_0^\top,\dots,g_{t-1}^\top] \in \R^{d\times t}$ denote the matrix of the stochastic gradients and let $\Pi_{\mG_t} \colon \R^n \to \operatorname{Im}(\mG_t)$ denote the projection onto the range of~$\mG_t$.
  \[
    \norm*{\xx_{t} - \Pi_{\mG_t}(\xx_{t})}^2_2 \leq \norm{\ee_t}^2_2 \,.
  \]
\end{theorem}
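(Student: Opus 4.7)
The plan is to explicitly decompose $\xx_t$ into a component that provably lies in $\operatorname{Im}(\mG_t)$ plus a residual given exactly by $\ee_t$, and then invoke the fact that the projection onto $\operatorname{Im}(\mG_t)$ is the closest point in that subspace.

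The first step is to unfold the update rule of Algorithm~\ref{alg:ecsgd}. Note that $\Delta_s = \pp_s - \ee_{s+1} = \gamma \gg_s + \ee_s - \ee_{s+1}$, so the iterate update $\xx_{s+1} = \xx_s - \Delta_s$ telescopes, starting from $\xx_0 = \0$, $\ee_0 = \0$, to yield
\[
  \xx_t \;=\; -\sum_{s=0}^{t-1} \bigl(\gamma \gg_s + \ee_s - \ee_{s+1}\bigr) \;=\; -\gamma \sum_{s=0}^{t-1} \gg_s \;+\; \ee_t\,.
\]
This identity, which I would record as a small intermediate lemma, is the heart of the argument and is essentially where error-feedback pays off compared to plain biased compression: the accumulated ``missing mass'' from all compressions is tracked exactly by $\ee_t$.

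The second step is the geometric observation. Set $\vv_t := -\gamma \sum_{s=0}^{t-1} \gg_s$; by construction $\vv_t$ is a linear combination of the columns of $\mG_t$, hence $\vv_t \in \operatorname{Im}(\mG_t)$. The decomposition above therefore reads $\xx_t = \vv_t + \ee_t$, so $\vv_t$ is a specific element of $\operatorname{Im}(\mG_t)$ whose distance to $\xx_t$ equals $\norm{\ee_t}_2$. Since $\Pi_{\mG_t}(\xx_t)$ is the nearest point in $\operatorname{Im}(\mG_t)$ to $\xx_t$, we get
\[
  \norm{\xx_t - \Pi_{\mG_t}(\xx_t)}_2^2 \;\leq\; \norm{\xx_t - \vv_t}_2^2 \;=\; \norm{\ee_t}_2^2\,,
\]
which is exactly the claim.

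There is really no obstacle here; the main thing to be careful about is bookkeeping of signs and indices in the telescoping sum, and making sure to use $\xx_0 = \0$ and $\ee_0 = \0$ so that no extra boundary terms appear. It is worth remarking in the write-up that this bound combined with Lemma~\ref{lem:bounded-error} immediately gives a quantitative control $\expect\norm{\xx_t - \Pi_{\mG_t}(\xx_t)}^2 \leq 4(1-\delta)\gamma^2 \sigma^2/\delta^2$, so that in the regime where $\ee_t$ is small (e.g.\ with a decaying step-size, or once the stochastic gradients themselves shrink near an optimum) the iterates of \ecsgd\ approach the linear span of the gradients; combined with Lemma~\ref{lem:span-pseudo-inverse} this is what justifies the claim about convergence to the max-margin solution.
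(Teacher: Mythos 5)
Your proof is correct and follows essentially the same route as the paper: telescoping the update to obtain $\xx_t - \ee_t = -\gamma\sum_{s=0}^{t-1}\gg_s \in \operatorname{Im}(\mG_t)$ and then using that the projection is the nearest point of the subspace. (Your sign bookkeeping is in fact more careful than the paper's one-line remark, which states the identity with flipped signs; this is immaterial since the span is a subspace.)
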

Here $\ee_t$ is the error as defined in Algorithm \ref{alg:ecsgd}. The theorem follows directly from observing that
$
  \rbr{\xx_{t+1} + \ee_{t+1}} = \rbr{\xx_0 + \sum_{i=0}^{t}\gamma \gg_i}\,,
$
 and hence lies in the linear span of the gradients.
\begin{remark}\label{rem:generalization-gap}
  Theorem \ref{thm:linear-span} along with Lemma \ref{lem:bounded-error} implies that the iterates of Algorithm \ref{alg:ecsgd} are always close to the linear span of the gradients.\vspace{-2mm}
  \[
    \norm*{\xx_{t} - \Pi_{\mG_t}(\xx_{t})}^2_2 \leq \frac{4\gamma^2(1 - \delta)}{\delta^2}\max_{i\in[t]}\norm{\gg_i}^2\,.
  \]
  This distance further reduces as the algorithm progresses since the the step-size $\gamma$ is typically reduced.
\end{remark}
\begin{figure}[!t]
	\centering
	\includegraphics[width=0.6\linewidth]{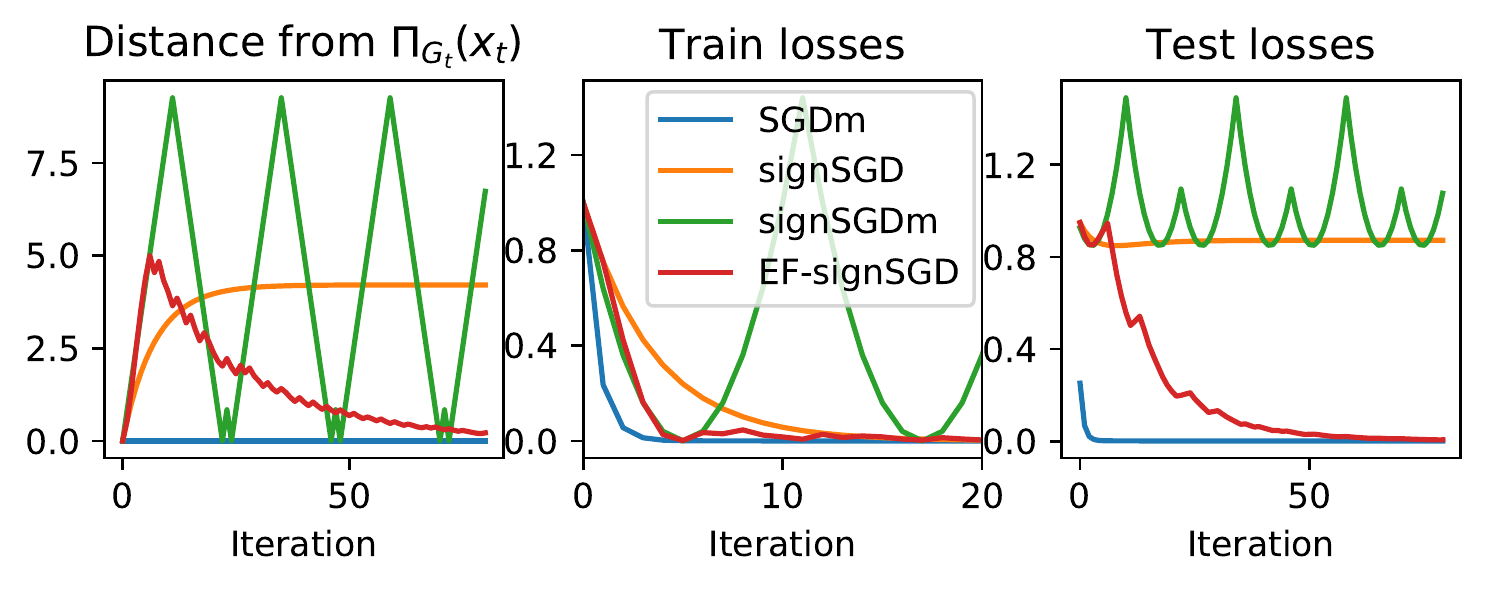}
	\caption{Left shows the distance of the iterate from the linear span of the gradients $\norm{\xx_t - \Pi_{\mG_t}(\xx_t)}$. The middle and the right plots show the train and test loss. {\signsgd} and {\signum} have a high distance to the span, and do not generalize (test loss is higher than 0.8). Distance of {\esignsgd} to the linear span (and the test loss) goes to~0.}
	\vspace{-3mm}
\label{fig:linear-span}
\end{figure}
\subsection{Simulations}\label{subesc:linear-span-simulations}
We compare the generalization of the four algorithms with full batch gradient: i) SGD ii) {\signsgd}, iii) {\signum}, and iv) {\esignsgd}. The data is generated as in \cite{wilson2017marginal} and is randomly split into test and train. The step-size $\gamma$ and (where applicable) the momentum parameter~$\beta$ are tuned to obtain fastest convergence, but the results are representative across parameter choices.

In all four cases (Fig. \ref{fig:linear-span}), the train loss quickly goes to~0. The distance to the linear span of gradients is quite large for {\signsgd} and {\signum}. For {\esignsgd}, exactly as predicted by our theory, it first increases to a certain limit and then goes to 0 as the algorithm converges. The test error, almost exactly corresponding to the distance $\norm{\xx_t - \Pi_{\mG_t}(\xx_t)}$, goes down to 0.  {\signum} oscillates significantly because of the momentum term, however the conclusion remains unchanged---the best test error is higher than 0.8. This indicates that using error-feedback might result in generalization performance comparable with SGD.

%

\section{Experiments}\label{sec:experiments}
We run experiments on deep networks to test the validity of our insights. The results confirm that i) {\esignsgd} with error feedback always outperforms the standard {\signsgd} and {\signum}, ii) the generalization gap of {\signsgd} and {\signum} vs. SGD gets larger for smaller batch sizes, iii) the performance of {\esignsgd} on the other hand is much closer to SGD, and iv) {\signsgd} behaves erratically when using small batch-sizes.

\subsection{Experimental setup}\vspace{-1mm}
All our experiments used the PyTorch framework \cite{paszke2017pytorch} on the CIFAR-10/100 dataset \cite{krizhevsky2009learning}. Each experiment is repeated three times and the results are reported in Fig \ref{fig:experiments-main}. Additional details and experiments can be found in Appendix \ref{sec:more-experiments}.

\vspace{-2mm}
\paragraph{Algorithms:} We experimentally compared the following four algorithms:
i) {\sgdm} which is SGD with momentum,
ii) (scaled){\signsgd} with step-size scaled by the $L_1$-norm of the gradient,
iii) {\signum} which is {\signsgd} using momentum, and
iv) {\esignsgd} (Alg. \ref{alg:esignsgd}). The scaled {\signsgd} performs the update\vspace{-2mm}
\[
  \xx_{t+1} := \xx_t - \gamma \frac{\norm{\gg_t}_1}{d}\sign{\gg_t}\,.\vspace{-1mm}
\]
We chose to include this in our experiments since we wanted to isolate the effects of error-feedback from that of scaling. Further we drop the unscaled {\signsgd} from our discussion here since it was observed to perform worse than the scaled version. As is standard in compression schemes \cite{seide20141,lin2017deep,wang2018atomo}, we apply our compression layer-wise. Thus the net communication for the (scaled) {\signsgd} and {\esignsgd} is $\sum_{i=1}^l(d_i + 32)$ bits where $d_i$ is the dimension of layer $i$, and $l$ is the total number of layers. If the total number of parameters is much larger than the number of layers, then the cost of the extra $32l$ bits is negligible---usually the number of parameters is three orders of magnitude more than the number of layers.

All algorithms are run for 200 epochs. The learning-rate is decimated at 100 epochs and then again at 150 epochs. The initial learning rate is tuned manually (see Appendix~\ref{sec:more-experiments}) for all algorithms using batch-size 128. For the smaller batch-sizes, the learning-rate is proportionally reduced as suggested in \cite{goyal2017accurate}. The momentum parameter~$\beta$ (where applicable) was fixed to $0.9$ and weight decay was left to the default value of $5 \times 10^{-4}$.

\vspace{-2mm}
\paragraph{Models:}  We use the VGG+BN+Dropout network on CIFAR-10 (VGG19) from \cite{simonyan2014very} and Resnet+BN+Dropout network (Resnet18) from \cite{he2016deep}. We adopt the standard data
augmentation scheme and preprocessing scheme \cite{he2016deep,he2016identity}. Our code builds upon on an open source library.\footnote{\href{https://github.com/kuangliu/pytorch-cifar}{github.com/kuangliu/pytorch-cifar}}

\begin{figure*}[!tb]
\centering
\begin{subfigure}{0.33\textwidth}
    \centering
    \includegraphics[width=1\textwidth]{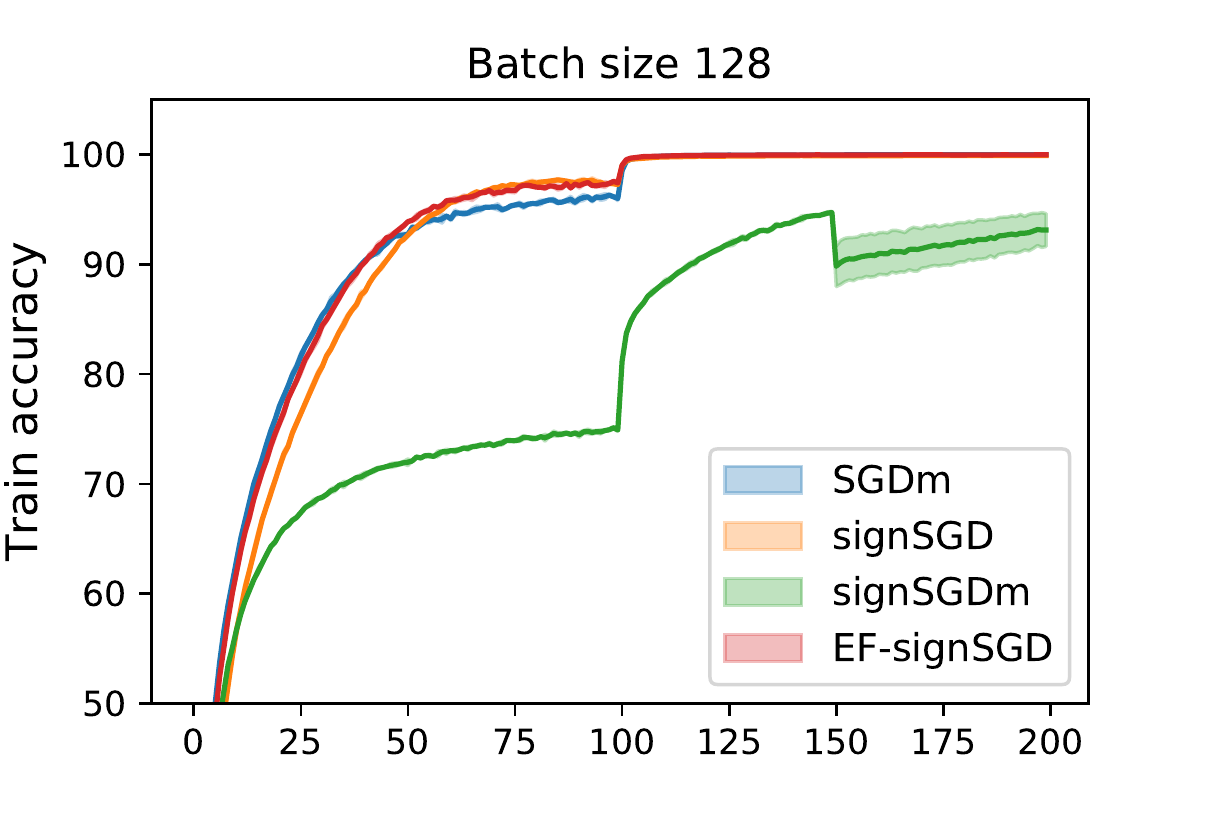}
\end{subfigure}%
\begin{subfigure}{0.33\textwidth}
    \centering
    \includegraphics[width=1\textwidth]{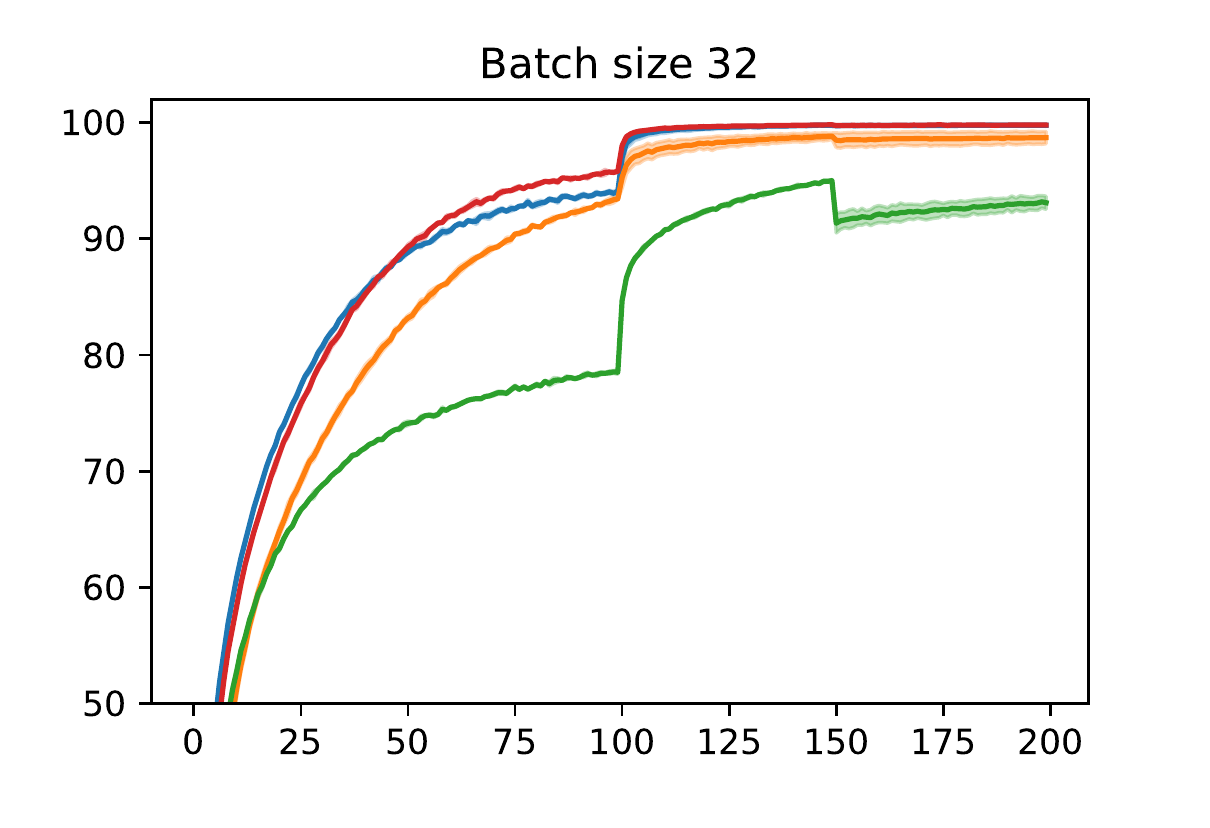}
\end{subfigure}%
\begin{subfigure}{0.33\textwidth}
    \centering
    \includegraphics[width=1\textwidth]{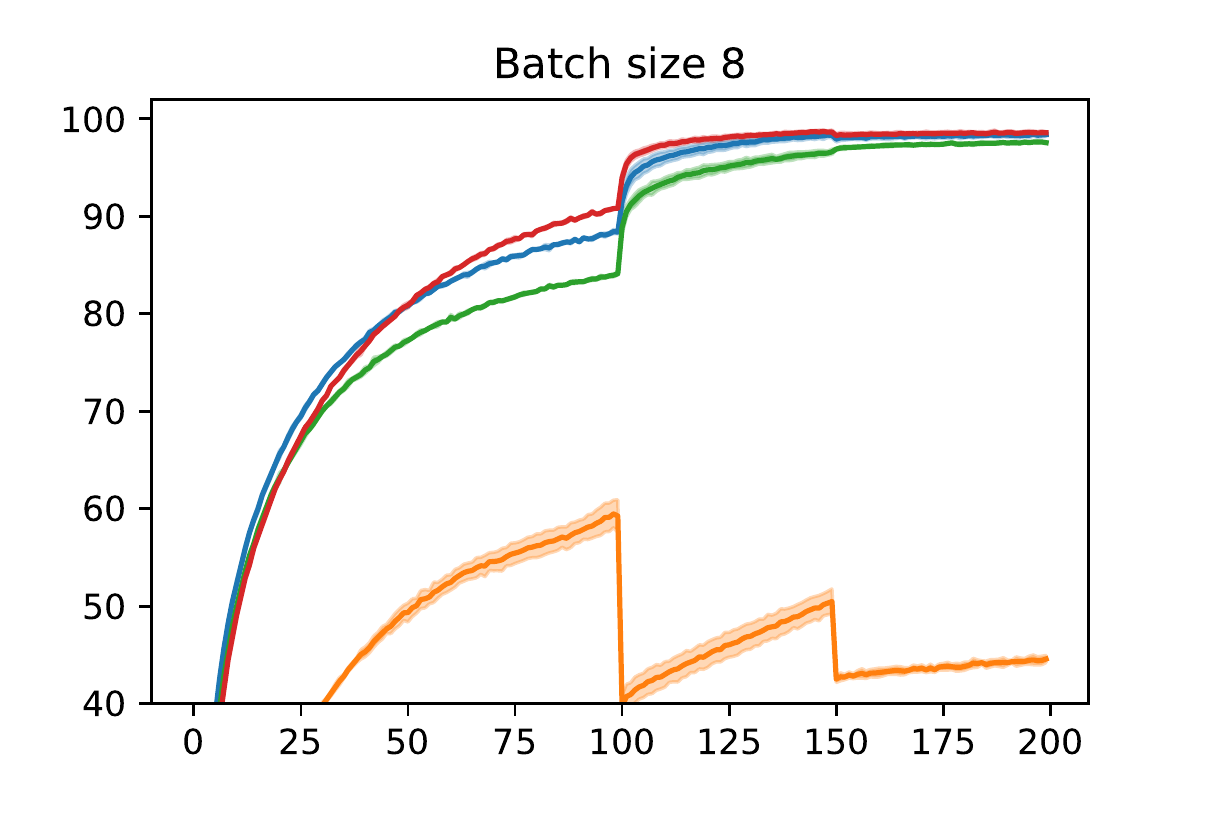}
\end{subfigure}
\begin{subfigure}{.33\textwidth}
    \centering
    \includegraphics[width=1\textwidth]{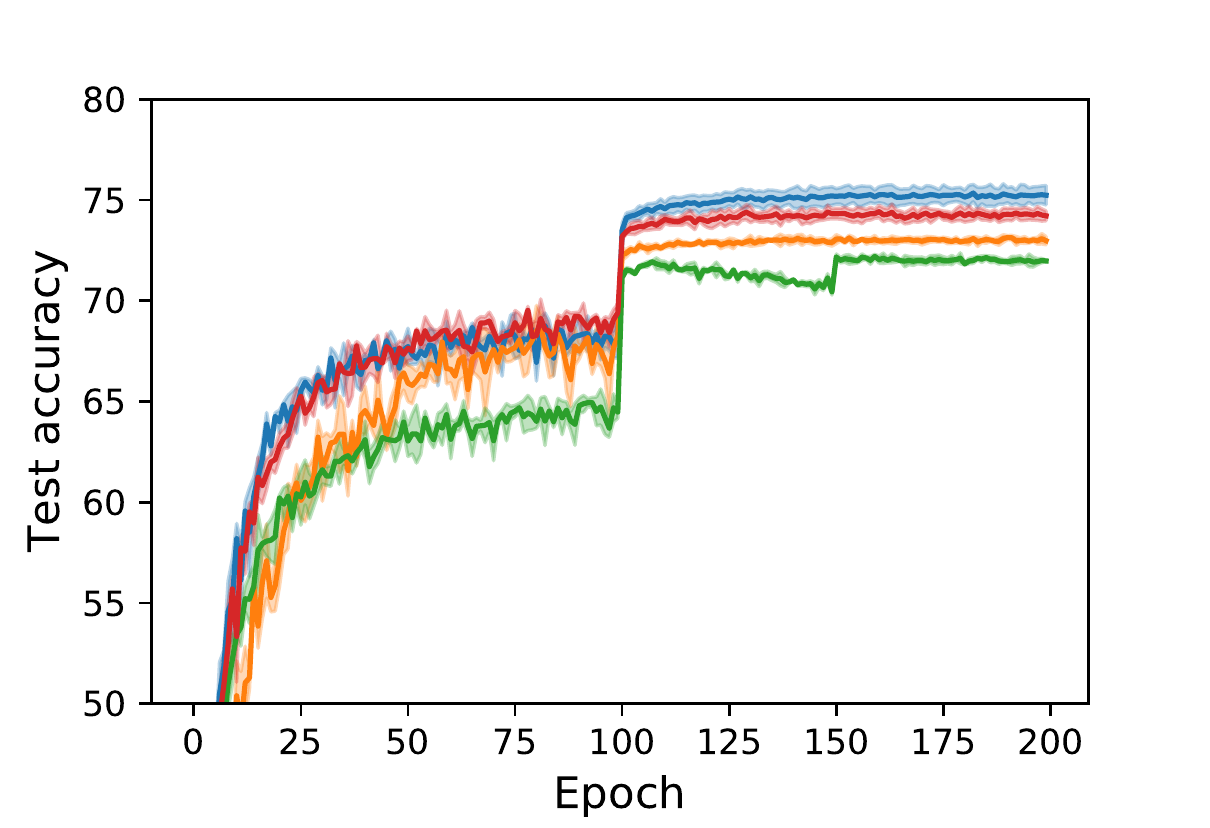}
\end{subfigure}%
\begin{subfigure}{.33\textwidth}
    \centering
    \includegraphics[width=1\textwidth]{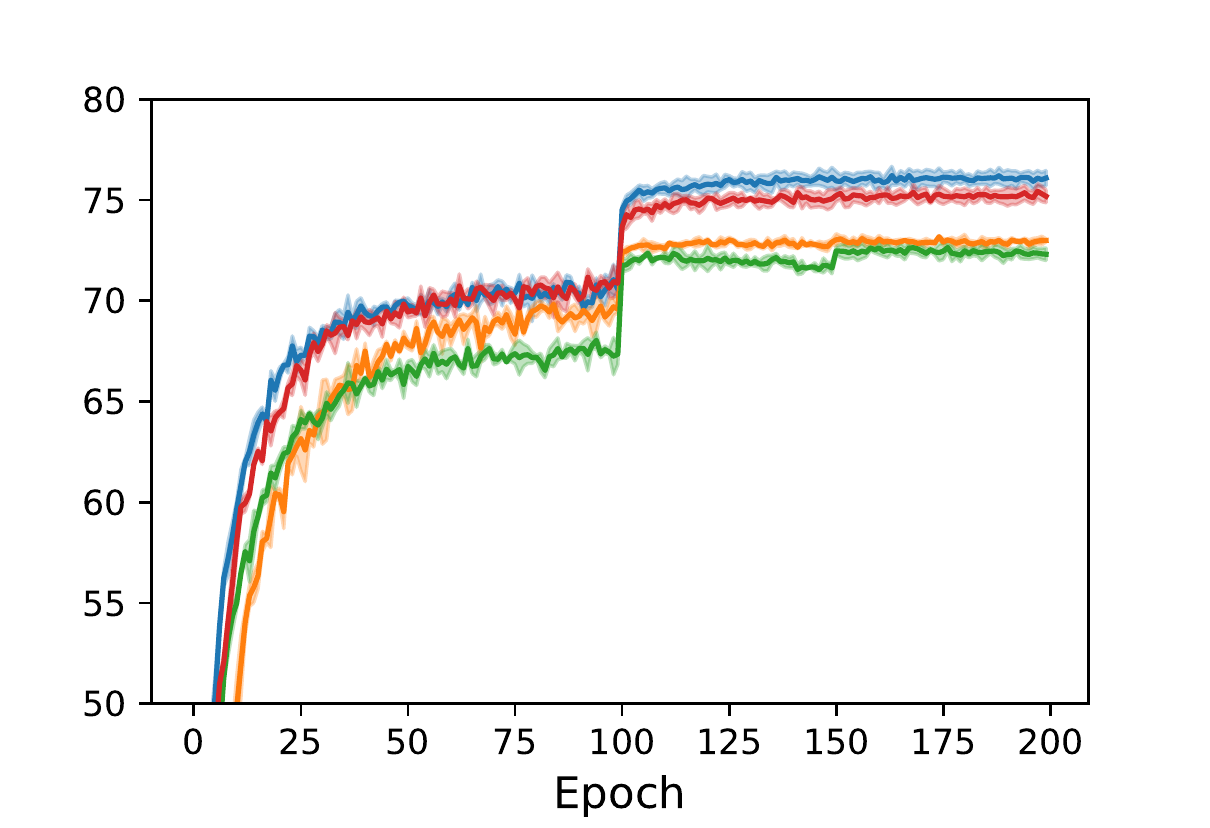}
\end{subfigure}%
\begin{subfigure}{.33\textwidth}
    \centering
    \includegraphics[width=1\textwidth]{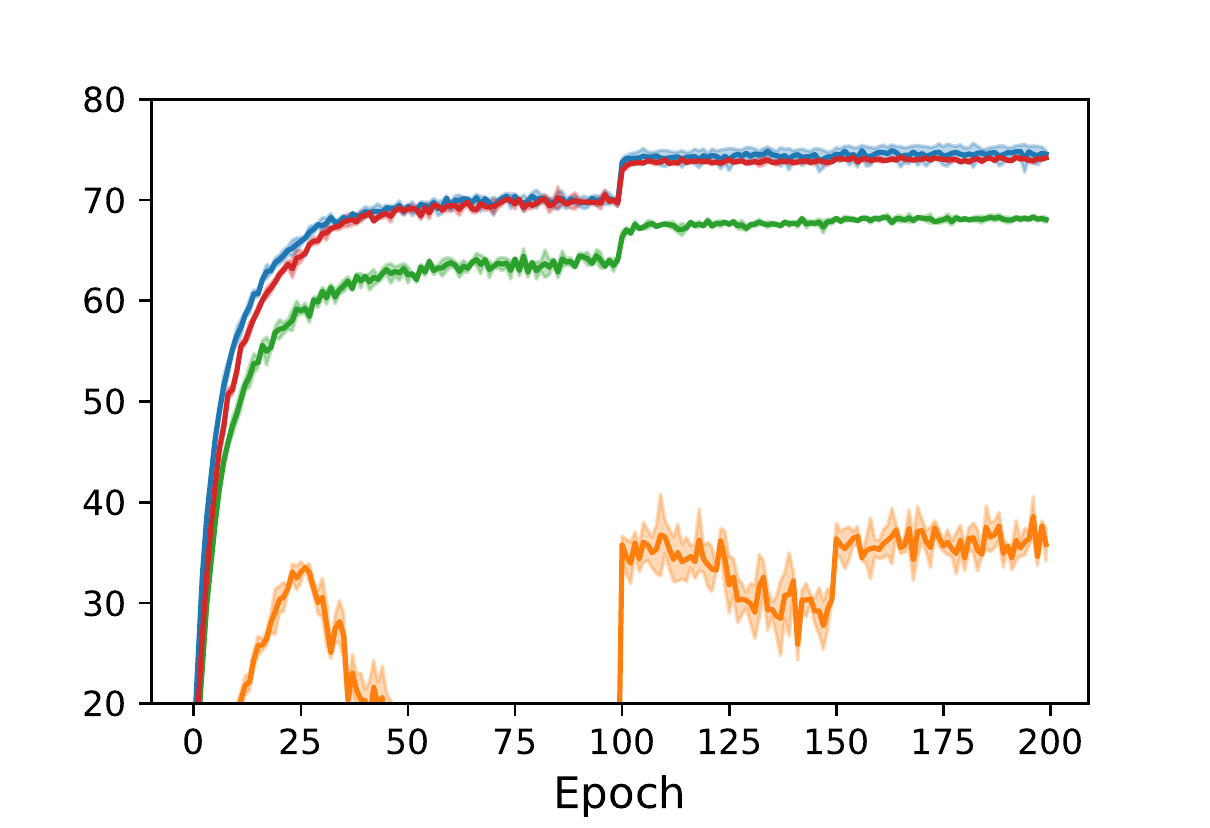}
\end{subfigure}
\caption[short]{Experimental results showing the train and test accuracy percentages on CIFAR-100 using Resnet18 for different batch-sizes. The solid curves represent the mean value and shaded region spans one standard deviation obtained over three replications. Note that the scale of the y-axis varies across the plots. {\esignsgd} consistently and significantly outperforms the other sign-based methods, closely matching the performance of {\sgdm}.}
\label{fig:experiments-main}
\end{figure*}
\vspace{-2mm}
\begin{table}[]
\centering
\setlength{\tabcolsep}{2pt}
{\small
\begin{tabular}{|c|c|c|c|c|}\hline
\textbf{Batch-size} & \textbf{\sgdm} & \textbf{\signsgd}  & \textbf{\signum} & \textbf{\esignsgd} \\ \hline
 128 & {75.35}  & -2.21 & -3.15 & \textbf{-0.92} \\ \hline
 32  & {76.22} & -3.04 & -3.57 & \textbf{-0.79} \\ \hline
 8   & {74.91}  & -36.35 & -6.6 & \textbf{-0.64} \\ \hline
\end{tabular}
}%
\caption[short]{Generalization gap on CIFAR-100 using Resnet18 for different batch-sizes. For {\sgdm} we report the best mean test accuracy percentage, and for the other algorithms we report their difference to the {\sgdm} accuracy (i.e. the generalization gap). {\esignsgd} has a much smaller gap.}\label{tab:test-main}
\end{table}

\subsection{Results}
The results of the experiments for Resnet18 on Cifar100 are shown in Fig.~\ref{fig:experiments-main} and Table~\ref{tab:test-main}. Results for VGG19 on Cifar10 are also similar and can be found in the Appendix. We make four main observations:

\vspace{-2mm}
\paragraph{{\esignsgd} is faster than {\sgdm} on train.} On the train dataset, both the accuracy and the losses (Fig. \ref{fig:resnet-full}) is better for {\esignsgd} than for SGD for all batch-sizes and models (Fig. \ref{fig:vgg-full}). In fact even {\signsgd} is faster than {\sgdm} on the train dataset on VGG19 (Fig. \ref{fig:vgg-full}) for batch-size 128. As we note in Section~\ref{para:faster-than-SGD}, the result that the scaled sign methods are also faster than SGD (and in fact faster than even the without feedback algorithms) overturns previously understood intuition (cf. \cite{kingma2014adam,bernstein2018signsgd}) for why {\signsgd} and other adaptive methods outperform SGD---i.e. restricting the effect of a some `bad' coordinates with high variance may not be the main reason why sign based methods are faster than SGD on train.

\vspace{-2mm}
\paragraph{{\esignsgd} almost matches {\sgdm} on test.} On the test dataset (Table \ref{tab:test-main}), the accuracy and the loss is much closer to SGD than the other sign methods across batch-sizes and models (Tables \ref{tab:test-resnet}, \ref{tab:test-vgg}). The generalization gap (both in accuracy and loss) reduces with decreasing batch-size. We believe this is because the learning-rate was scaled proportional to the batch-size and hence smaller learning-rates lead to smaller generalization gap, as was theoretically noted in Remark \ref{rem:generalization-gap}.

\vspace{-2mm}
\paragraph{{\signsgd} performs poorly for small batch-sizes.} The performance of {\signsgd} is always worse than {\esignsgd} indicating that scaling is insufficient and that error-feedback is crucial for performance. Further all metrics (train and test, loss and accuracy) increasingly become worse as the batch-size decreases indicating that {\signsgd} is indeed a brittle algorithm. In fact for batch-size 8, the algorithm becomes extremely unstable.

\paragraph{{\signum} performs poorly on some datasets and for smaller batch-sizes.} We were surprised that the training performance of {\signum} is significantly worse than even {\signsgd} on CIFAR-100 for batch-sizes 128 and 32. On CIFAR-10, on the other hand, {\signum} manages to be faster than {\sgdm} (though still slower than {\esignsgd}). We believe this may be due to {\signum} being sensitive to the weight-decay parameter as was noted in \cite{bernstein2018signsgd}. We do not tune the weight-decay parameter and leave it to its default value for all methods (including {\esignsgd}). Further the generalization gap of {\signum} gets worse for decreasing batch-sizes with a significant 6.6\% drop in accuracy for batch-size 8.
\vspace{-2mm}

\section{Conclusion}
We study the effect of biased compressors on the convergence and generalization of stochastic gradient algorithms for non-convex optimization. We have shown that biased compressors if naively used can lead to bad generalization, and even non-convergence. We then show that using error-feedback all such adverse effects can be mitigated. Our theory and experiments indicate that using error-feedback, our compressed gradient algorithm {\ecsgd} enjoys the same rate of convergence as original SGD---thereby giving compression \emph{for free}. We believe this should have a large impact in the design of future compressed gradient schemes for distributed and decentralized learning. Further, given the relation between sign-based methods and {\adam}, we believe that our results will be relevant for better understanding the performance and limitations of adaptive methods. Finally, in this work we only consider the single worker case. Developing a practical and scalable algorithm for multiple workers is a fruitful direction for future work.
%


\paragraph*{Acknowledgements.}
We are grateful to Tao Lin, Thijs Vogels, and Negar Foroutan for their help with the experiments. We also thank Jean-Baptiste Cordonnier, Konstantin Mishchenko, Jeremy Bernstein, and anonymous reviewers for their suggestions which helped improve our writeup.

\bibliographystyle{plainnat}
\bibliography{papers}
\appendix
\part*{Appendix}
\section{Additional Experiments}\label{sec:more-experiments}
In this section we give the full experimental details and results.
\subsection{Convergence under sparse noise}\label{subsec:sparse-noise}
\begin{figure}[!htbp]
	\centering
	\includegraphics[width=0.5\linewidth]{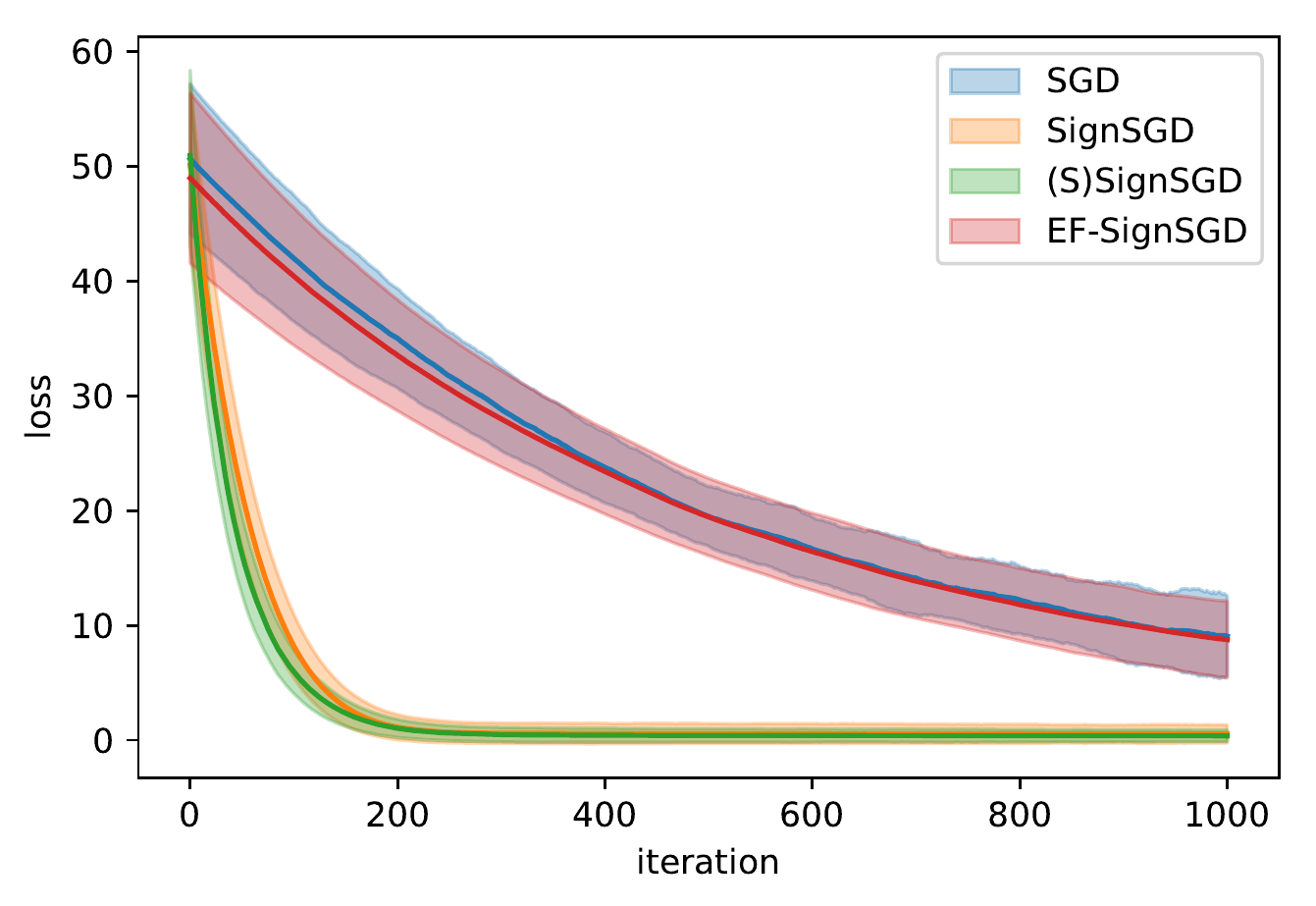}
	\caption{A simple toy problem where {\signsgd} and (scaled){\signsgd} are faster than both SGD and {\esignsgd}. The experiment is repeated 100 times with mean indicated by the solid line and the shaded region spans one standard deviation. As in \cite{bernstein2018signsgd}, the loss is $f(\xx) = \frac{1}{2}\norm{\xx}^2_2$ for $\xx \in \real^{100}$, with gradient $\nabla f(\xx) = \xx$. The stochastic gradient is constructed by adding Gaussian noise $N(0,100^2)$ to only the first coordinate of the gradient. The best learning-rate for SGD and {\esignsgd} was found to be 0.001, and for {\signsgd} and (scaled){\signsgd} was 0.01. The conclusion of this toy experiment directly contradicts the results of our real-world experiments (Section \ref{sec:experiments}) where {\esignsgd} is faster during training than both SGD and {\signsgd}. This shows that the sparse noisy coordinate explanation proposed by \cite{bernstein2018signsgd} is probably an incorrect explanation for the speed of sign based methods during training.}
\end{figure}
\subsection{Description of models and datasets}

\textbf{The cifar dataset.} The CIFAR 10 and 100 training and testing datasets was loaded using the default Pytorch torchvision api\footnote{\url{https://pytorch.org/docs/stable/torchvision/index.html}}. Data augmentation consisting of random $32 \times 32$ crops (padding 4) and horizontal flips was performed. Both sets were normalized over each separate channel.

\textbf{VGG (on CIFAR 10).} We used VGG19 architecture consisting in the following layers: \\
64 -> 64 -> M -> 128 -> 128 -> M -> 256 -> 256 -> 256 -> 256 -> M -> 512 -> 512 -> 512 -> 512 -> M -> 512 -> 512 -> 512 -> 512 -> M\\
where M denotes max pool layers (kernel 2 and stride 2), and each of the number $n$ (either of 64/128/256/512) represents a two dimensional convolution layer with $n$ channels a kernel of 3 and a padding of 1. All of them are followed by a batch normalization layer. Everywhere, ReLU activation is used.

\textbf{Resnet (on CIFAR 100).} We used a standard Resnet18 architecture with one convolution followed by four blocks and one dense layer \footnote{\url{https://github.com/kuangliu/pytorch-cifar/blob/master/models/resnet.py}}.

\subsection{Learning rate tuning}
For all the experiments, the learning rate was divided by 10 at epochs 100 and again at 150. We tuned the initial learning rate on batchsize 128. The learning rates for batchsize 32 and 8 were scaled down by 4 and 16 respectivley. To tune the initial learning rate, the algorithm was run with the same constant learingrate for 100 epochs. Then the learning rate which resulted in the best (i.e. smallest) test loss is chosen. The search space of possible learning rates was taken to be 9 values equally spaced in logarithmic scale over $10^{-5}$ to $10^1$ (inclusive).

The numbers below are rounded values (2 significant digits) of the actual learning rates:
\[
1.0 \times 10^{-5}, 5.6 \times 10^{-5}, 3.2 \times 10^{-4}, 1.8 \times 10^{-3}, {1.0 \times 10^{-2}}, 5.6 \times 10^{-2}, 3.2 \times 10^{-1}, 1.8 \times 10^0, 1.0 \times 10^1\,.
\]
 The best learning rate for each of the method is shown in table \ref{tab:learning-rate}.
 \begin{table}[!htbp]
 \centering
 \begin{tabular}{|c|c|c|}\hline
 \textbf{Algorithm} & \textbf{Resnet18} & \textbf{VGG19} \\\hline
    \sgdm & $1.0\times 10^{-2}$ & $1.0\times 10^{-2}$ \\\hline
    {\signsgd} & $5.6\times 10^{-2}$ & $5.6 \times 10^{-2}$  \\\hline
    {\signum} & $3.2 \times 10^{-4}$ & $5.6\times 10^{-5}$\\\hline
    {\esignsgd} & $5.6\times 10^{-2}$ & $5.6\times 10^{-2}$\\\hline
\end{tabular}
\caption{The best initial learning rates for the four algorithms for batch size 128 on VGG19 (CIFAR 10 data) and Resnet18 (CIFAR 100 data).}
\label{tab:learning-rate}
\end{table}

\subsection{Experiments with Resnet}
We report the complete results (including the losses) for Resnet in Fig.~\ref{fig:resnet-full}.
\begin{figure*}[!htbp]
\centering
\begin{subfigure}{0.33\textwidth}
    \centering
    \includegraphics[width=1\textwidth]{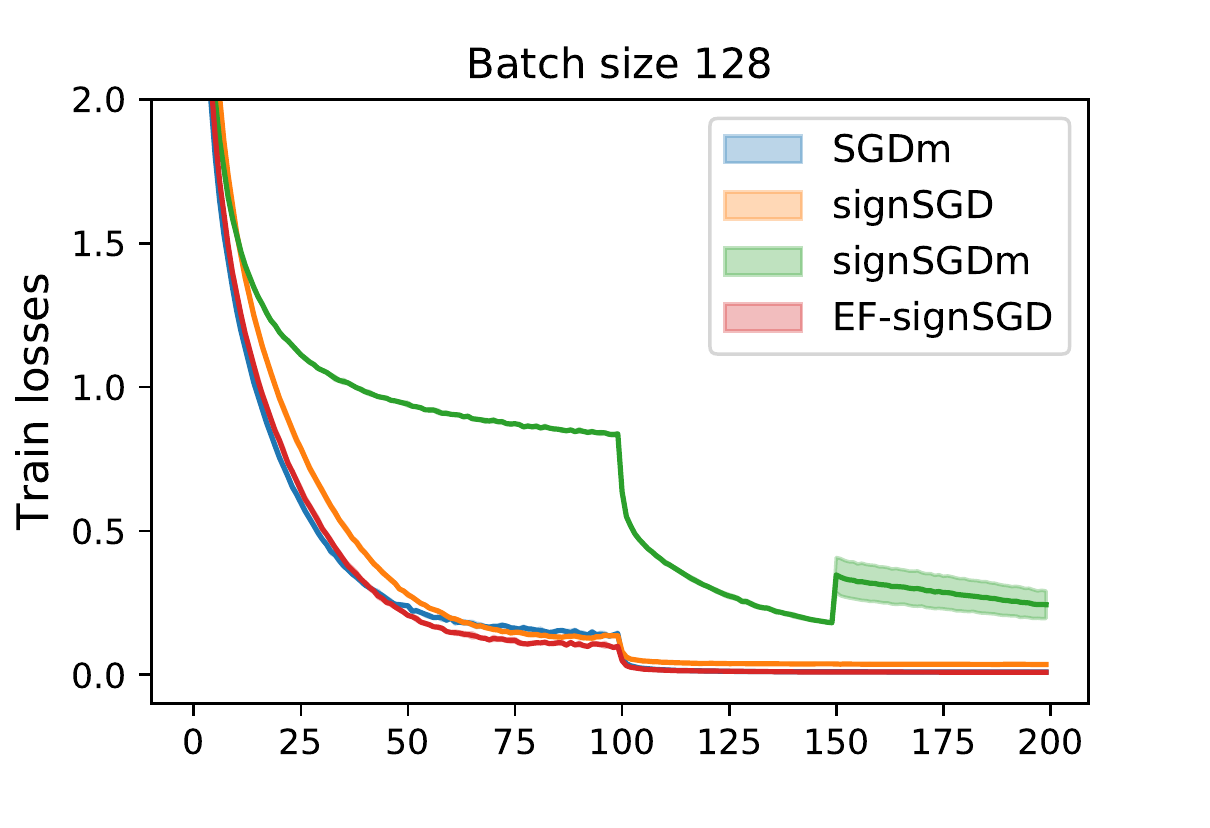}
\end{subfigure}%
\begin{subfigure}{0.33\textwidth}
    \centering
    \includegraphics[width=1\textwidth]{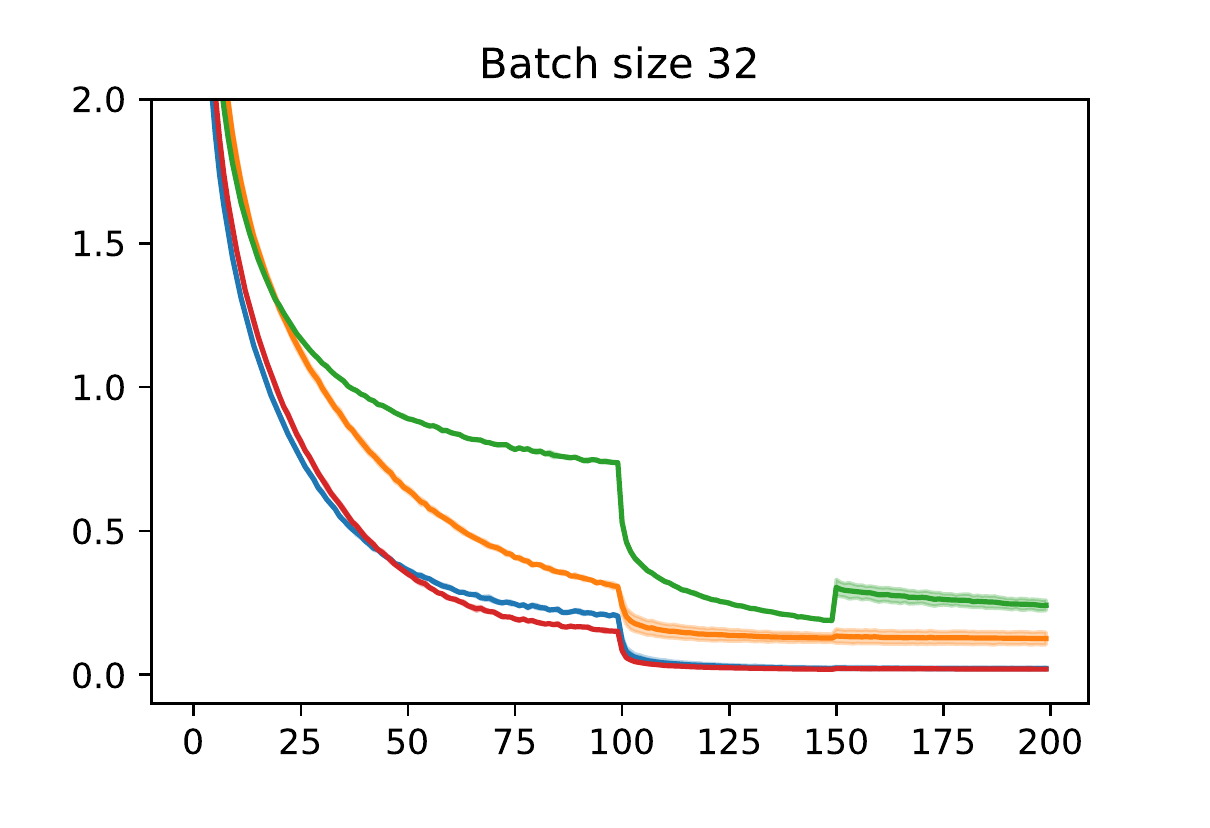}
\end{subfigure}%
\begin{subfigure}{0.33\textwidth}
    \centering
    \includegraphics[width=1\textwidth]{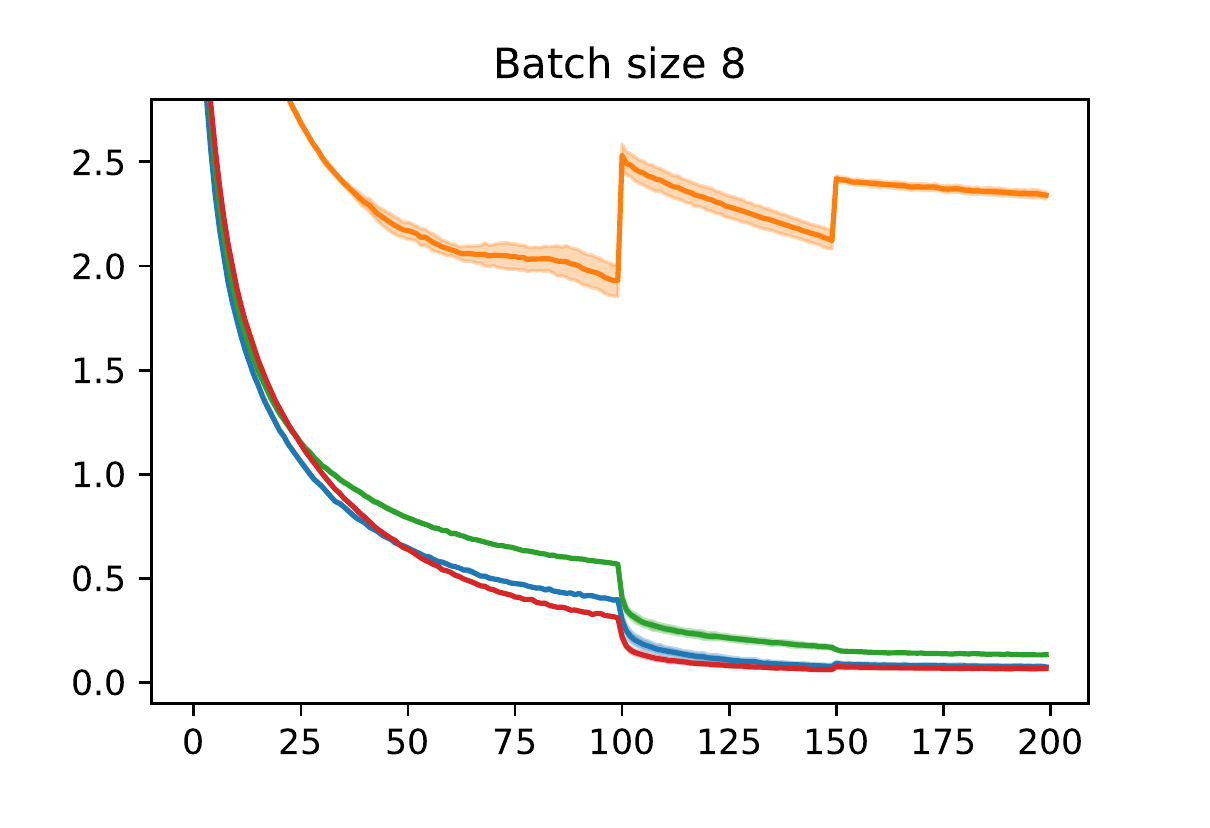}
\end{subfigure}

\begin{subfigure}{0.33\textwidth}
    \centering
    \includegraphics[width=1\textwidth]{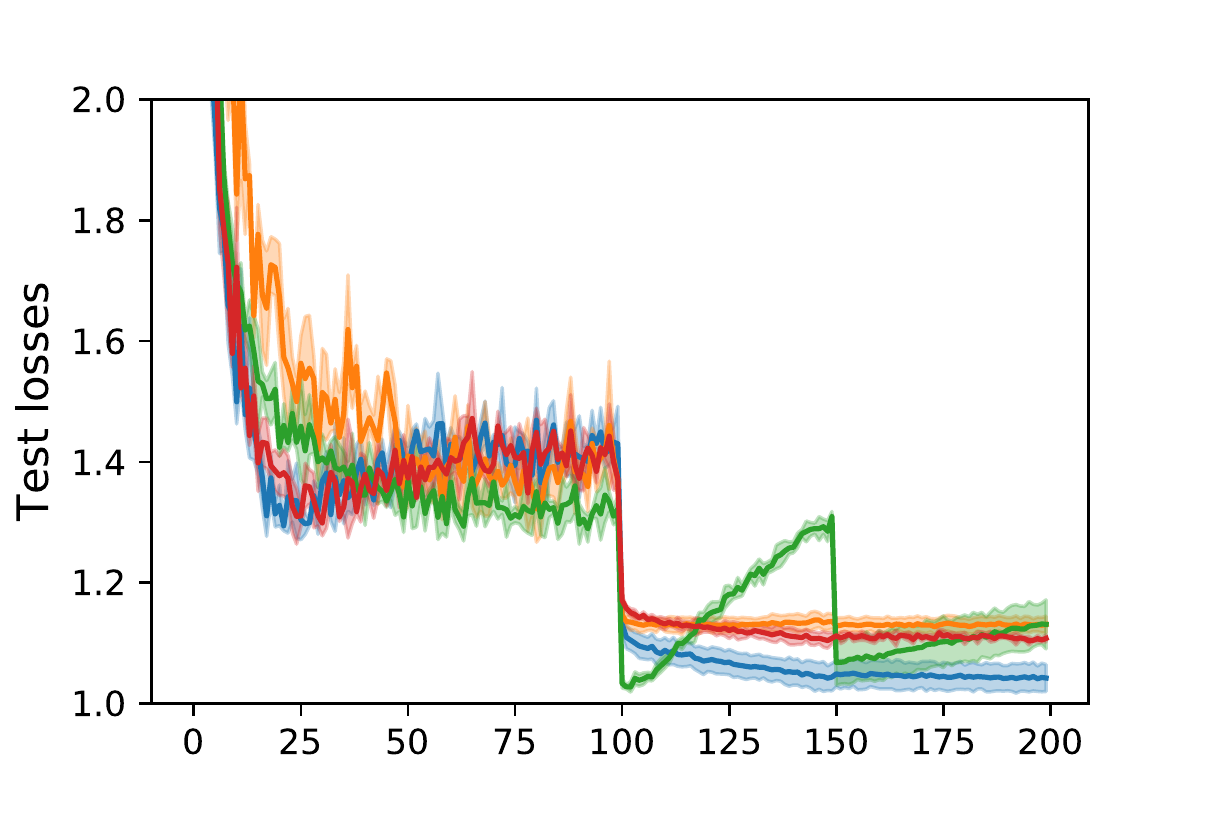}
\end{subfigure}%
\begin{subfigure}{0.33\textwidth}
    \centering
    \includegraphics[width=1\textwidth]{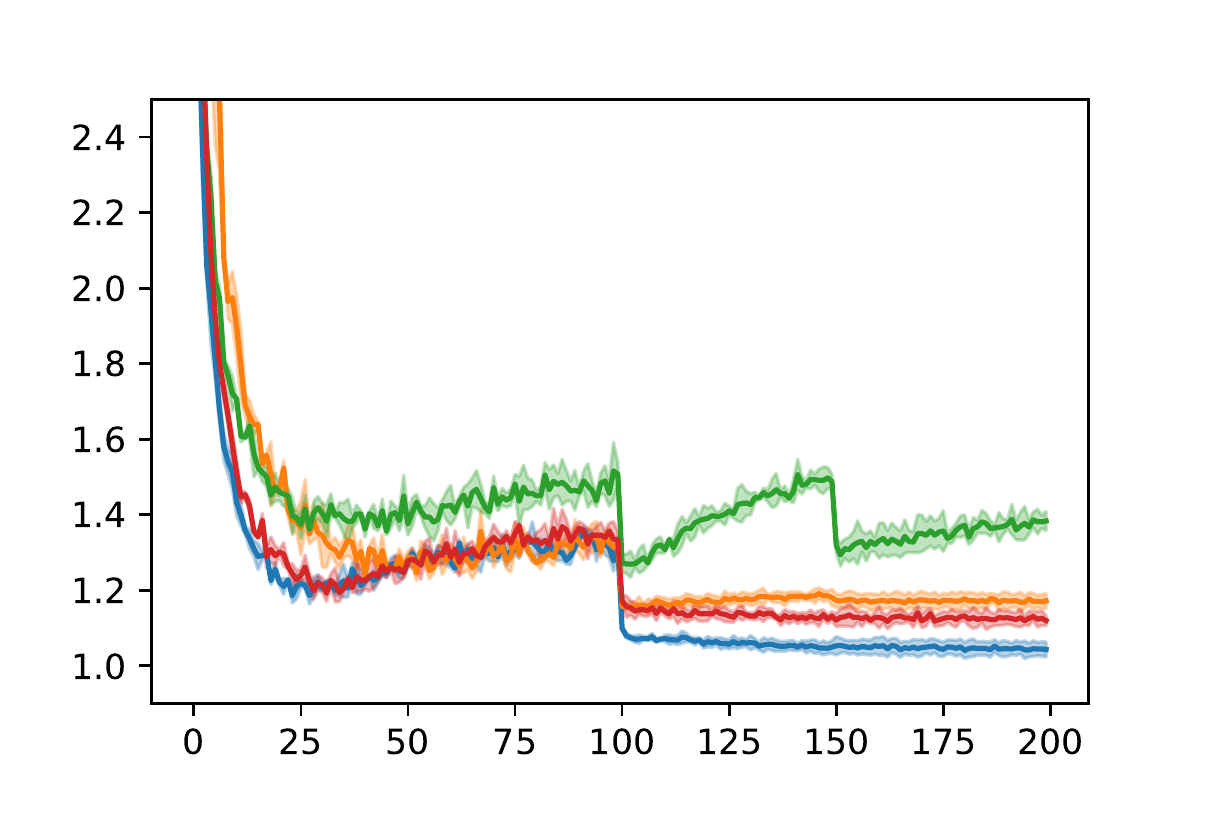}
\end{subfigure}%
\begin{subfigure}{0.33\textwidth}
    \centering
    \includegraphics[width=1\textwidth]{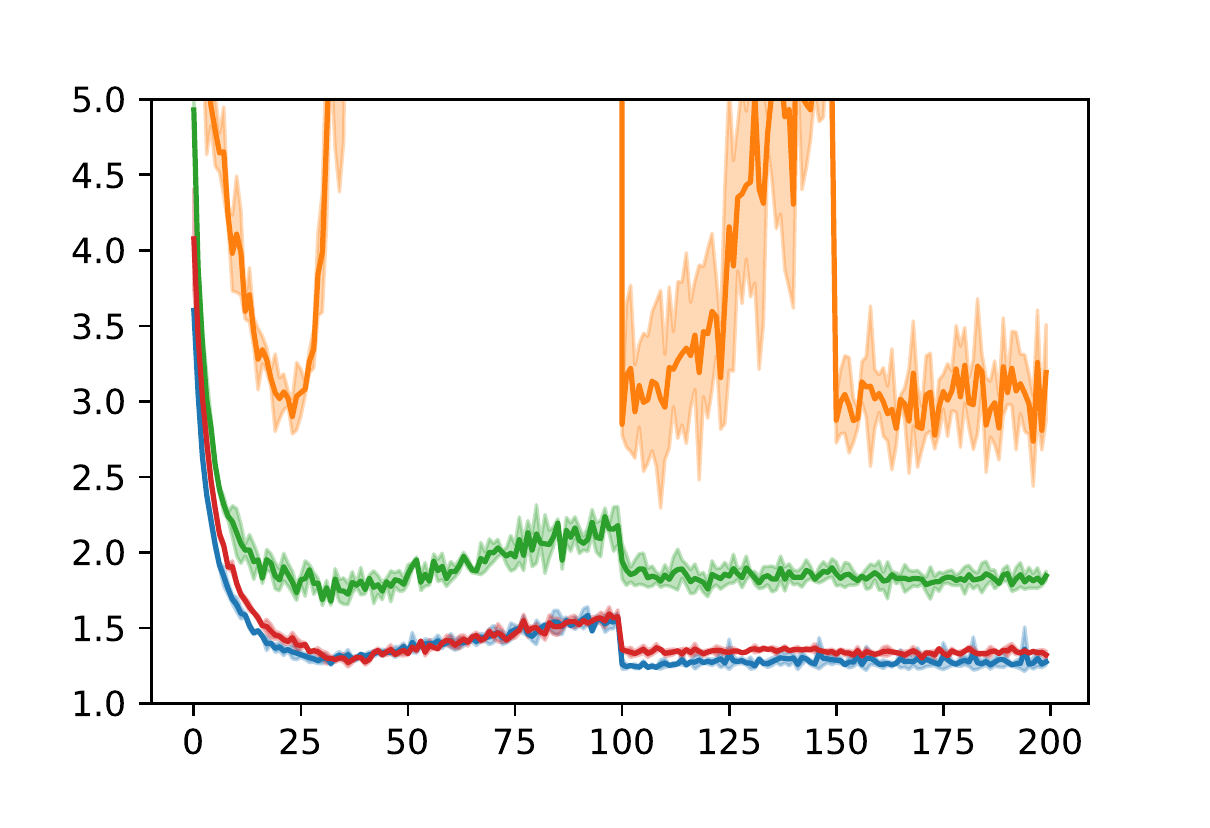}
\end{subfigure}

\begin{subfigure}{0.33\textwidth}
    \centering
    \includegraphics[width=1\textwidth]{figs/main_experiments/batchsize-128/resnet-cifar100/train_accuracies.pdf}
\end{subfigure}%
\begin{subfigure}{0.33\textwidth}
    \centering
    \includegraphics[width=1\textwidth]{figs/main_experiments/batchsize-32/resnet-cifar100/train_accuracies.pdf}
\end{subfigure}%
\begin{subfigure}{0.33\textwidth}
    \centering
    \includegraphics[width=1\textwidth]{figs/main_experiments/batchsize-8/resnet-cifar100/train_accuracies.pdf}
\end{subfigure}

\begin{subfigure}{.33\textwidth}
    \centering
    \includegraphics[width=1\textwidth]{figs/main_experiments/batchsize-128/resnet-cifar100/test_accuracies.pdf}
\end{subfigure}%
\begin{subfigure}{.33\textwidth}
    \centering
    \includegraphics[width=1\textwidth]{figs/main_experiments/batchsize-32/resnet-cifar100/test_accuracies.pdf}
\end{subfigure}%
\begin{subfigure}{.33\textwidth}
    \centering
    \includegraphics[width=1\textwidth]{figs/main_experiments/batchsize-8/resnet-cifar100/test_accuracies.pdf}
\end{subfigure}
\caption[short]{Experimental results showing the loss values and accuracy percentages on the train and test datasets, on CIFAR-100 using Resnet18 for different batch-sizes. The solid curves represent the mean value and shaded region spans one standard deviation obtained over three repetitions. Note that the scale of the y-axis varies across the plots. The losses behave very similar to the accuracies---{\esignsgd} consistently and significantly outperforms the other sign-based methods, is faster than {\sgdm} on train, and closely matches {\sgdm} on test.}\label{fig:resnet-full}
\end{figure*}

\begin{table}[!htbp]
\centering
\begin{tabular}{cc|c|c|c|c|}
\cline{3-6}
\multicolumn{1}{l}{}                                                                        &     & \multicolumn{4}{c|}{Algorithm} \\ \cline{3-6}
\multicolumn{1}{l}{}                                                                        &     & \textbf{\sgdm} & \textbf{scaled \signsgd} & \textbf{\signum} & \textbf{\esignsgd} \\ \hline
\multicolumn{1}{|c|}{\multirow{3}{*}{\begin{tabular}[c]{@{}c@{}}Batch\\ size\end{tabular}}}
& 128 & {75.35}  & -2.21 & -3.15 & \textbf{-0.92} \\ \cline{2-6}
\multicolumn{1}{|c|}{}
& 32  & {76.22} & -3.04 & -3.57 & \textbf{-0.79} \\ \cline{2-6}
\multicolumn{1}{|c|}{}
& 8   & {74.91} & -36.35 & -6.6 & \textbf{-0.64} \\ \hline
\end{tabular}
\caption[short]{Generalization gap on CIFAR-100 using Resnet18 for different batch-sizes. For {\sgdm} we report the best mean test accuracy percentage, and for the other algorithms we report their difference to the {\sgdm} accuracy (i.e. the generalization gap). {\esignsgd} has a much smaller gap which decreases with decreasing batchsize. The generalization gap of {\signum} and {\signsgd} increases as the batchsize decreases.}\label{tab:test-resnet}
\end{table}

\subsection{Experiments with VGG}
We report the complete results (including the losses) for VGG in Fig.~\ref{fig:vgg-full}.
\begin{figure*}[!htbp]
\centering

\begin{subfigure}{0.33\textwidth}
    \centering
    \includegraphics[width=1\textwidth]{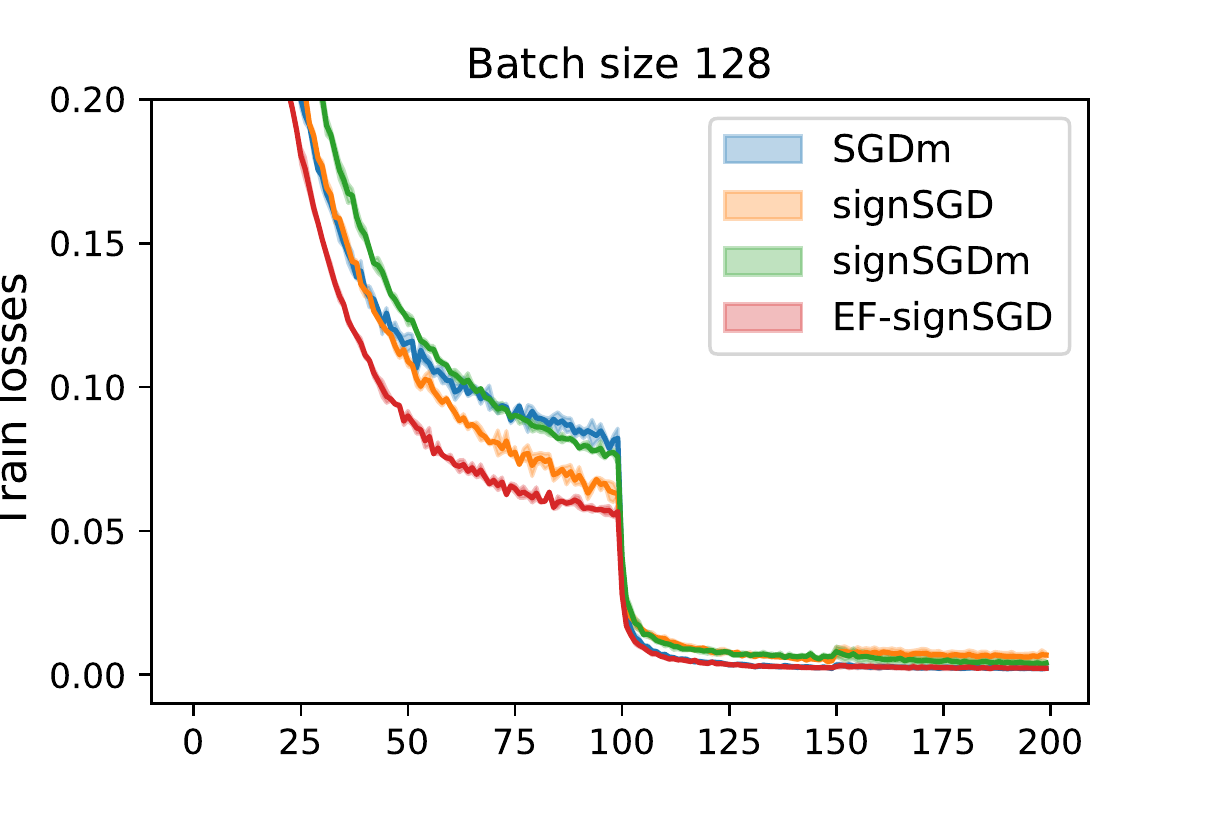}
\end{subfigure}%
\begin{subfigure}{0.33\textwidth}
    \centering
    \includegraphics[width=1\textwidth]{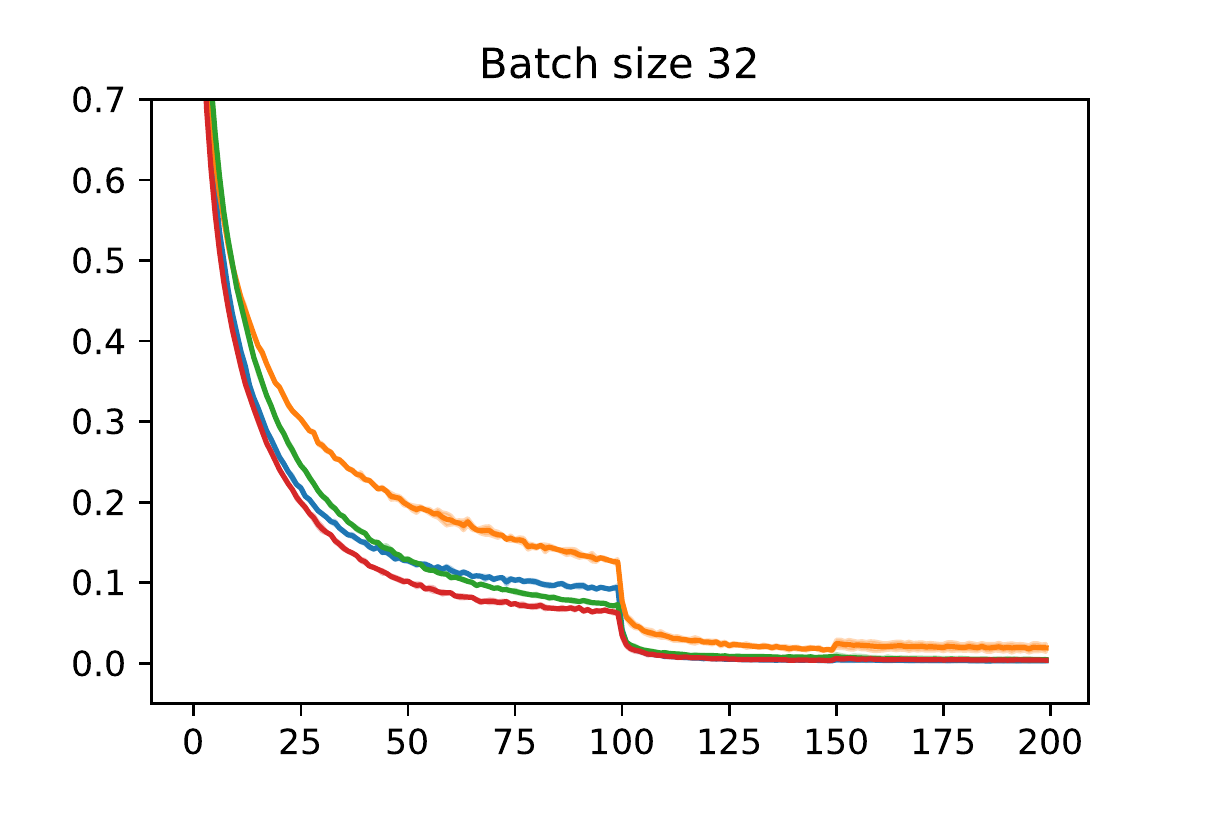}
\end{subfigure}%
\begin{subfigure}{0.33\textwidth}
    \centering
    \includegraphics[width=1\textwidth]{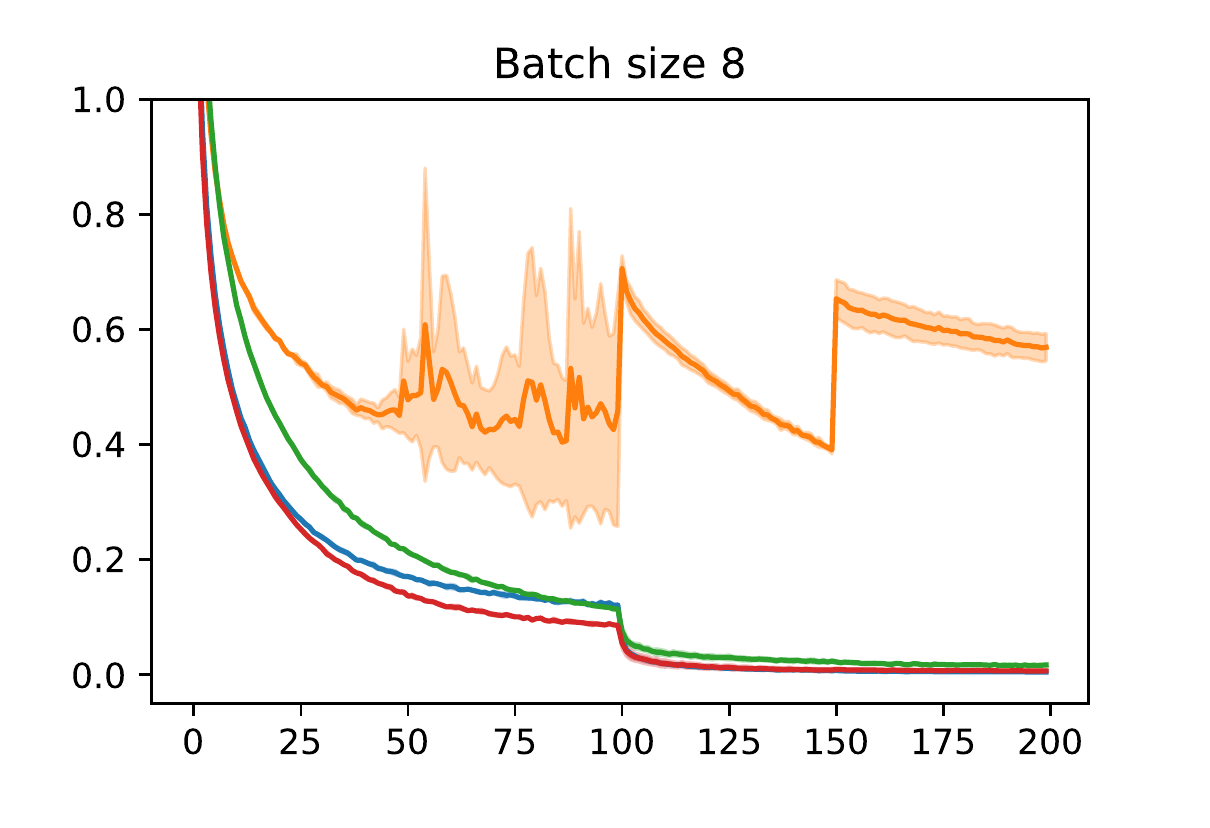}
\end{subfigure}

\begin{subfigure}{0.33\textwidth}
    \centering
    \includegraphics[width=1\textwidth]{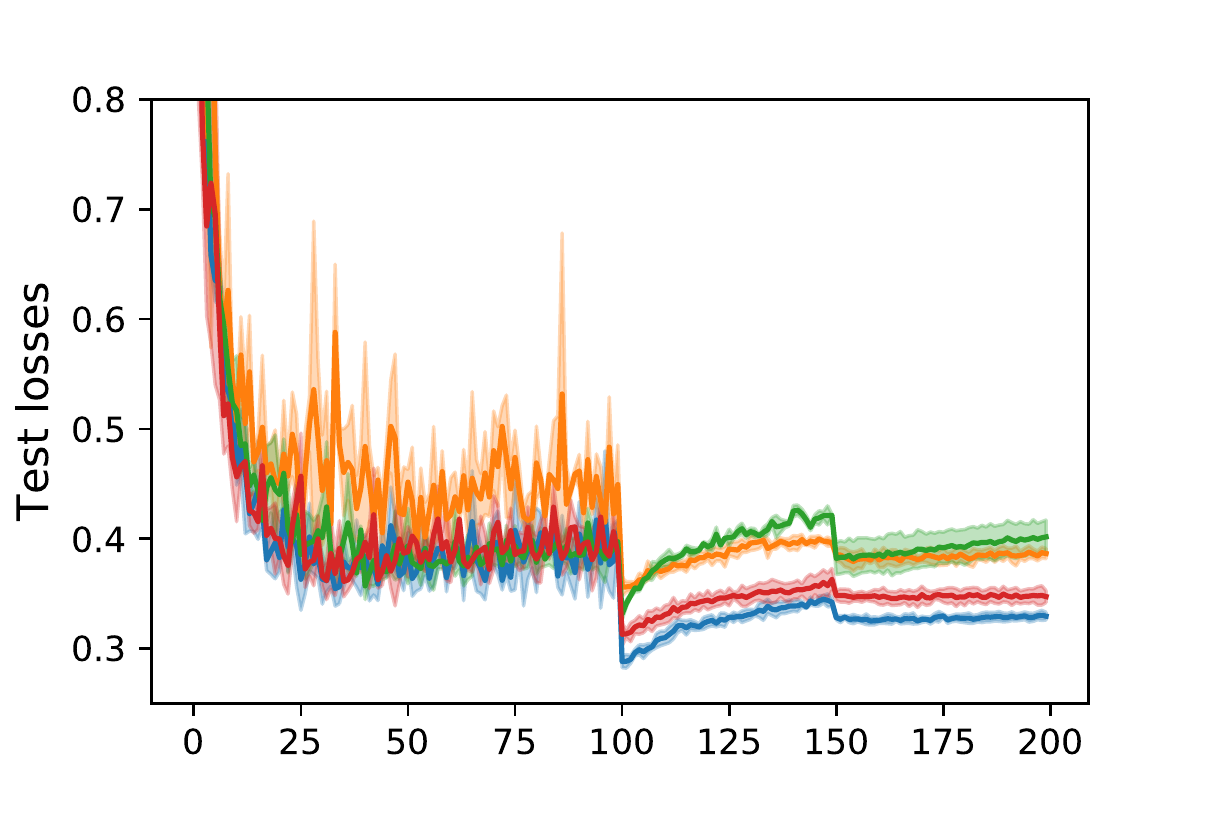}
\end{subfigure}%
\begin{subfigure}{0.33\textwidth}
    \centering
    \includegraphics[width=1\textwidth]{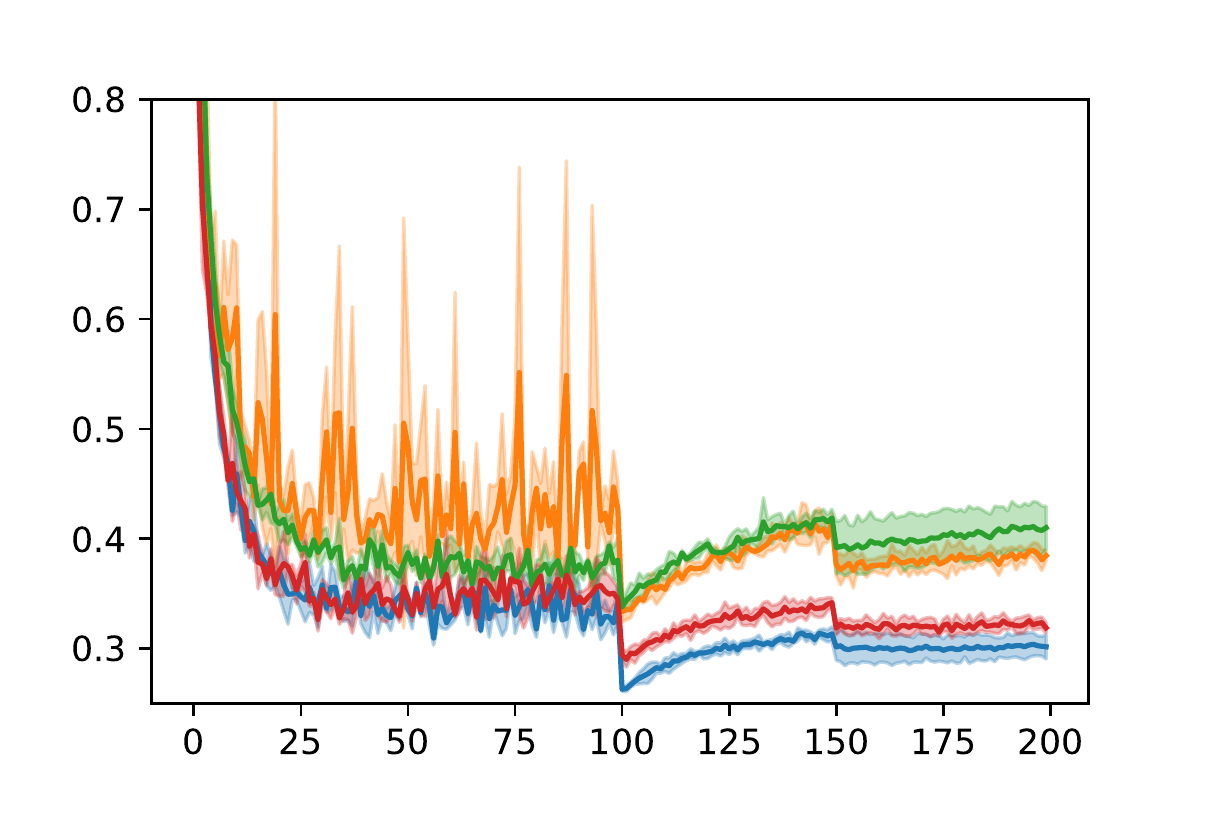}
\end{subfigure}%
\begin{subfigure}{0.33\textwidth}
    \centering
    \includegraphics[width=1\textwidth]{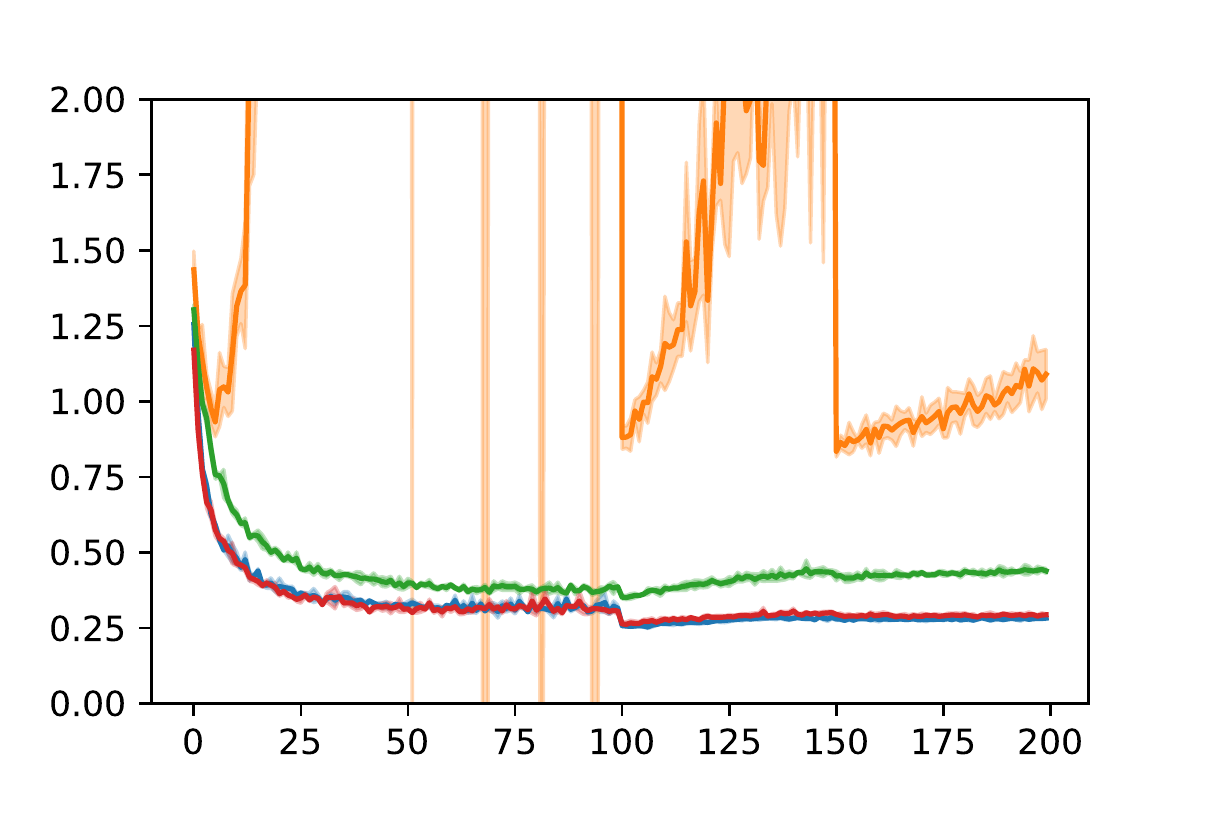}
\end{subfigure}

\begin{subfigure}{0.33\textwidth}
    \centering
    \includegraphics[width=1\textwidth]{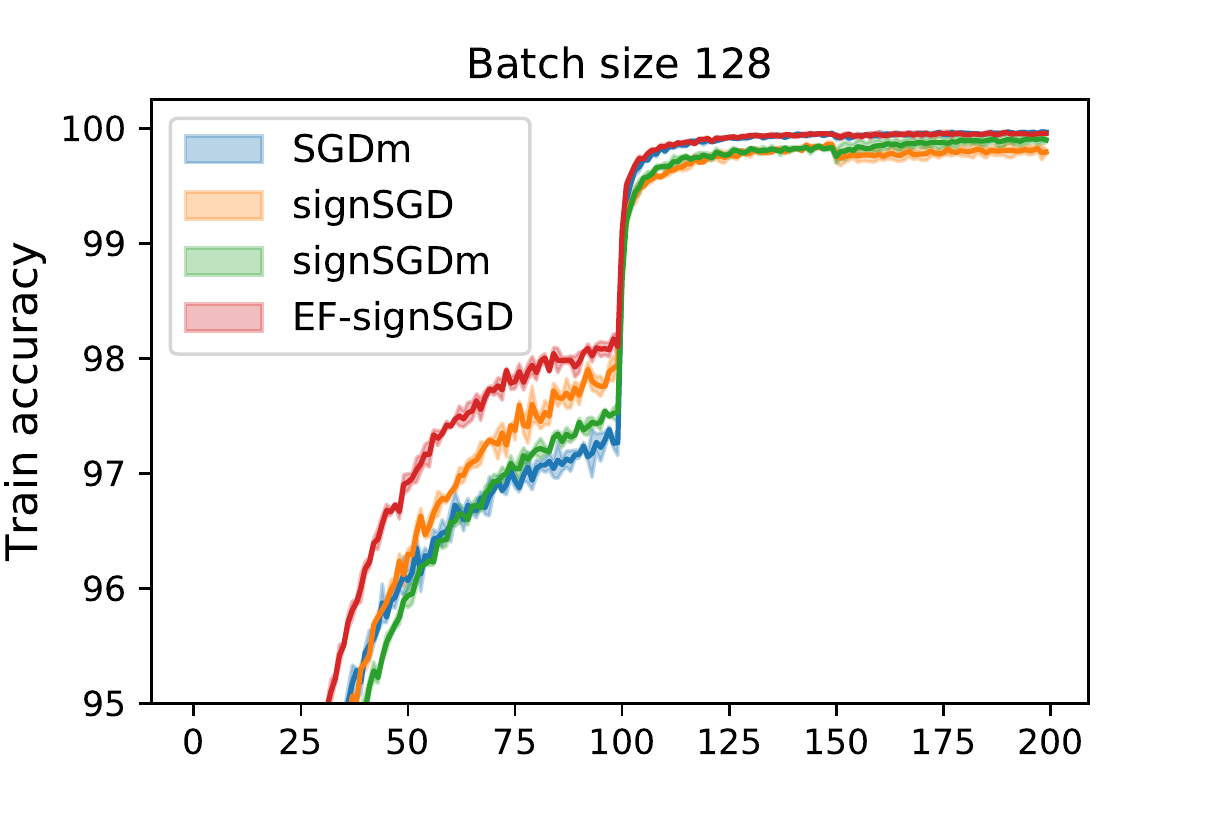}
\end{subfigure}%
\begin{subfigure}{0.33\textwidth}
    \centering
    \includegraphics[width=1\textwidth]{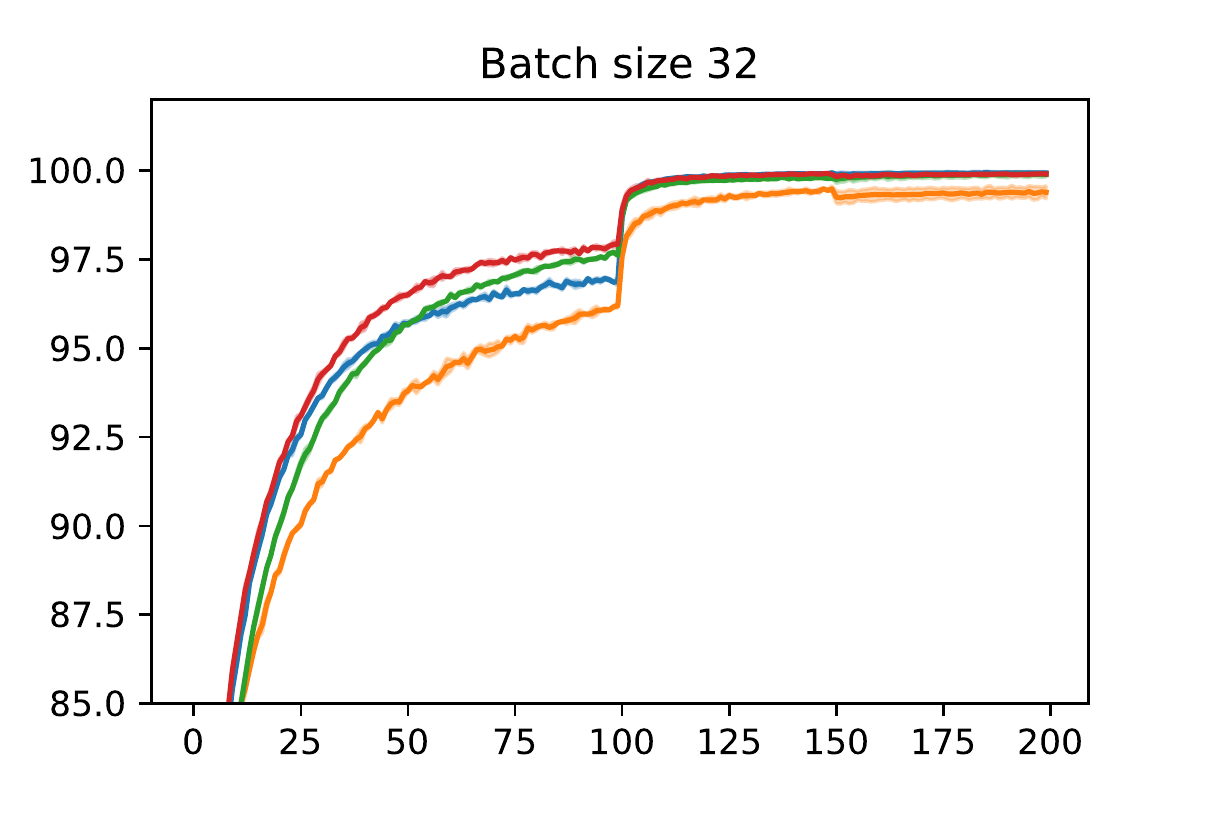}
\end{subfigure}%
\begin{subfigure}{0.33\textwidth}
    \centering
    \includegraphics[width=1\textwidth]{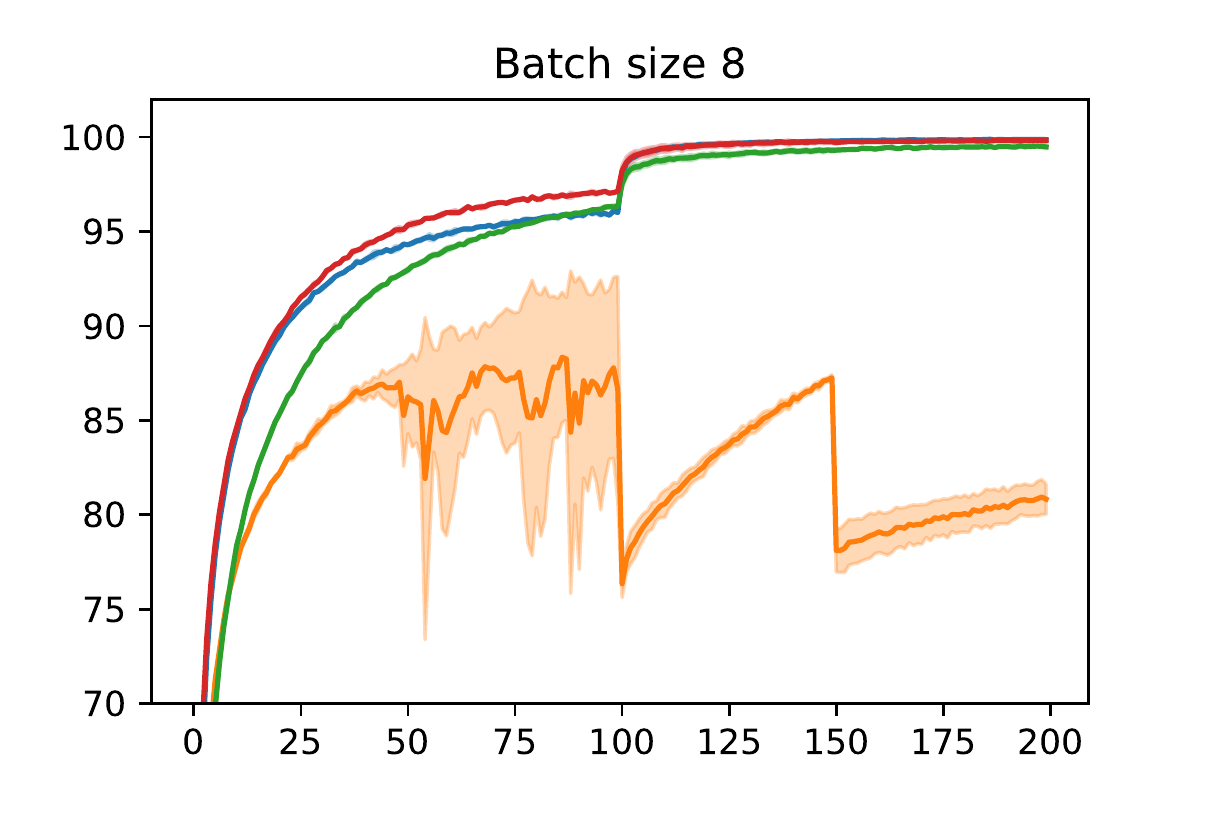}
\end{subfigure}

\begin{subfigure}{.33\textwidth}
    \centering
    \includegraphics[width=1\textwidth]{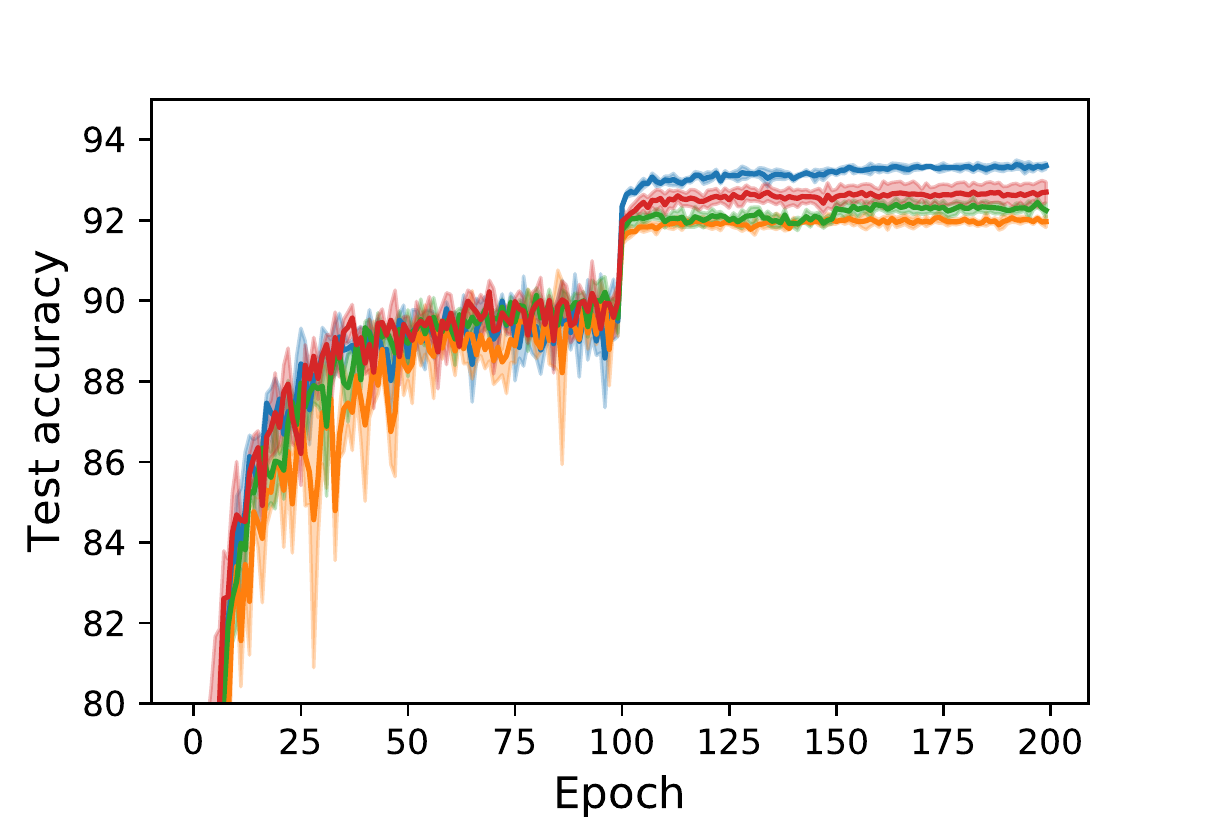}
\end{subfigure}%
\begin{subfigure}{.33\textwidth}
    \centering
    \includegraphics[width=1\textwidth]{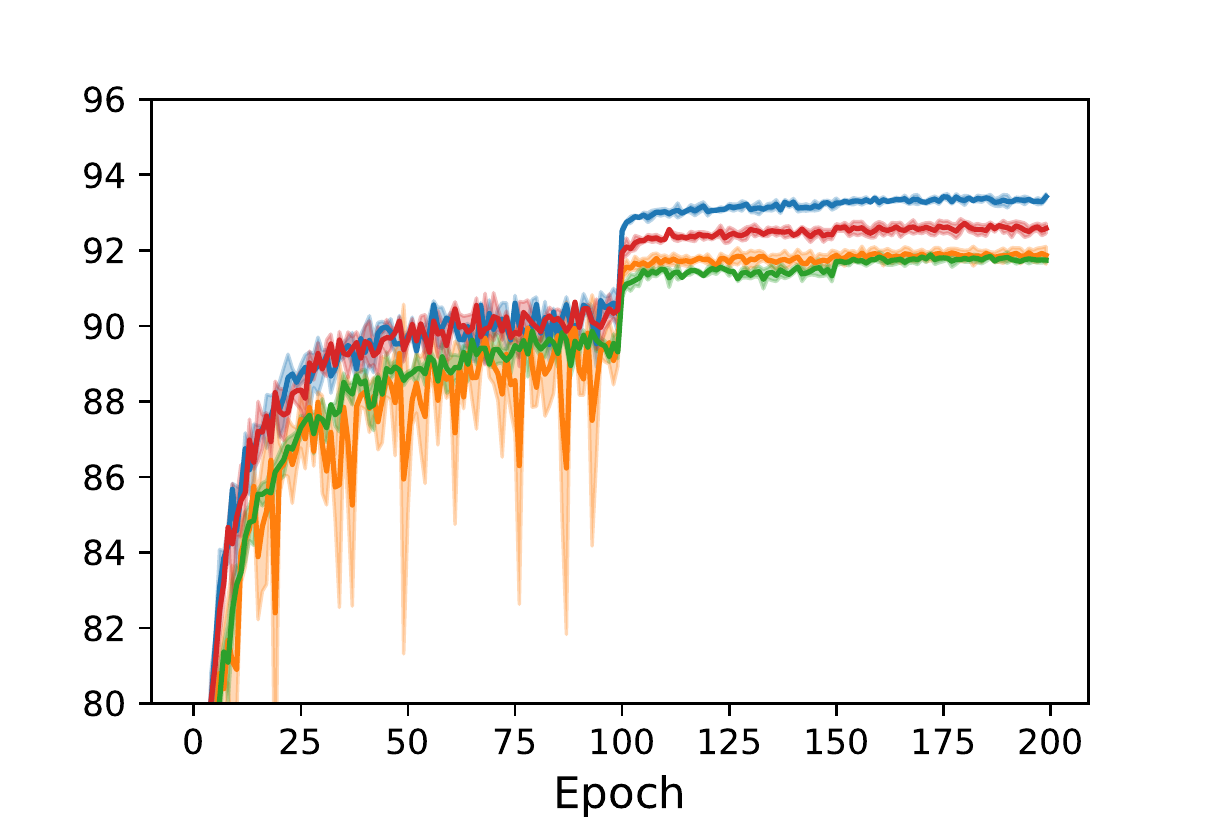}
\end{subfigure}%
\begin{subfigure}{.33\textwidth}
    \centering
    \includegraphics[width=1\textwidth]{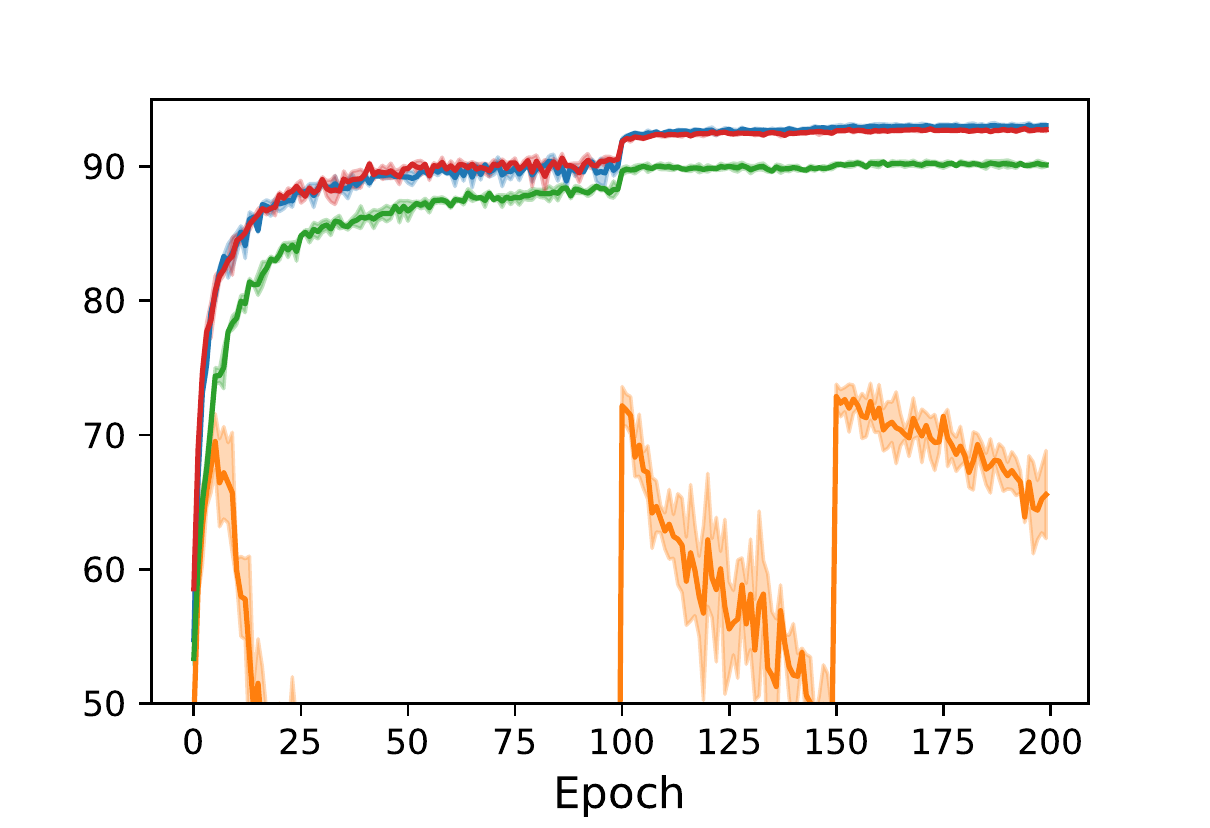}
\end{subfigure}

\caption[short]{Experimental results showing the loss values and accuracy percentages on the train and test datasets, on CIFAR-10 using VGG19 for different batch-sizes. The solid curves represent the mean value and shaded region spans one standard deviation obtained over three repetitions. Note that the scale of the y-axis varies across the plots. The plots for VGG19 behave very similarly to that of Resnet18, except that {\signum} performs better on the train dataset. On the test dataset, {\signum} and the other algorithms behave exactly as in Resnet. Here too, {\esignsgd} consistently and significantly outperforms the other sign-based methods, is faster than {\sgdm} on train, and also closely matches the test performance of {\sgdm}.}
\label{fig:vgg-full}
\end{figure*}

\begin{table}[!htbp]
\centering
\begin{tabular}{cc|c|c|c|c|}
\cline{3-6}
\multicolumn{1}{l}{}                                                                        &     & \multicolumn{4}{c|}{Algorithm} \\ \cline{3-6}
\multicolumn{1}{l}{}                                                                        &     &
\textbf{\sgdm} & \textbf{scaled \signsgd} & \textbf{\signum} & \textbf{\esignsgd} \\ \hline
\multicolumn{1}{|c|}{\multirow{3}{*}{\begin{tabular}[c]{@{}c@{}}Batch\\ size\end{tabular}}}
& 128 & {93.38} & -1.31 & -0.94 & \textbf{-0.68} \\ \cline{2-6}
\multicolumn{1}{|c|}{}
& 32  & {93.42} & -1.49 & -1.54 & \textbf{-0.71} \\ \cline{2-6}
\multicolumn{1}{|c|}{}
& 8   & {93.09} & -20.22 & -2.75 & \textbf{-0.27} \\ \hline
\end{tabular}
\caption[short]{Generalization gap on CIFAR-10 using VGG19 for different batch-sizes. For {\sgdm} we report the best mean test accuracy percentage, and for the other algorithms we report their difference to the {\sgdm} accuracy (i.e. the generalization gap). {\esignsgd} has a much smaller gap which decreases with decreasing batchsize. The generalization gap of {\signum} and {\signsgd} increases as the batchsize decreases.}\label{tab:test-vgg}
\end{table}

\newpage

\subsection{Data generation process (Section \ref{subesc:linear-span-simulations})}
The data is generated as in Section 3.3 of \cite{wilson2017marginal}. We fix $n = 200$ (the number of data points) and $d= 6n$ (dimension) in the below process. Each entry of the target label vector $\yy \in \{-1,1\}^n$ is uniformly set as $-1$ or $1$. Then the $j$th coordinate (column) of the $i$th data point (row) in the data matrix $\mA \in \real^{n\times d}$ is filled as follows:
\[
  A_{i,j} = \begin{cases}
      y_i & \quad j=1\,,\\
      1 & \quad j=2,3\,,\\
      1 & \quad j = 4 + 5(i − 1),\dots, 4 + 5(i − 1) + 2(1 − y_i)\,,\\
      0 & \quad \text{otherwise}\,.
    \end{cases}
\]
Then the data matrix $\mA$ and labels $\yy$ are randomly (and equally) split between the train and the test dataset. Hence there are 100 data points each of dimension 1200 in the test and train.

\section{Missing Proofs}
In this section we fill out the proofs of claims, lemmas, and theorems made in the main paper.
\subsection{Proof of counter-example (Theorem \ref{thm:counter})}
The stochastic gradient at iteration $t$ is of the form 
\[
  \gg_t = \aa_{i_t} l_{i_t}'(\lin*{\aa_{i_t}}{\xx})\,.
\]
This means that
\[
  \sign{\gg_t} = \sign{\aa_{i_t}}\cdot\sign{l'_{i_t}(\lin*{\aa_{i_t}}{\xx})} = \pm \ss\,.
\]
Thus $\xx_{t+1} = \xx_t \pm \gamma \ss$ and the iterates of {\signsgd} can only move along the direction $\ss$. Then, {\signsgd} can converge only if there exists $\gamma^\star \in \real$ such that
\[
  \xx_0 = \xx^\star + \gamma^\star \ss\,.
\]
Since the measure of this set in $\real^d$ for $d \geq 2$ is 0, we can conclude that {\signsgd} will not converge to $\xx^\star$ almost surely. %

\subsection{Proof of bounded error (Lemma \ref{lem:bounded-error})}
By definition of the error sequence,
\[
  \norm{\ee_{t+1}}^2 = \norm{\cC(\pp_t) - \pp_t}^2_2 \leq (1- \delta)\norm{\pp_t}^2_2 = (1- \delta)\norm{\ee_t + \gamma \gg_t}^2_2\,.
\]
In the inequality above we used that $\cC(\cdot)$ is a $\delta$-approximate compressor. We thus have a recurrence relation on the bound of $\ee_t$. Using Young's ineuqality, we have that for any $\eta > 0$:
\[
  \norm{\ee_{t+1}}^2 \leq (1- \delta)\norm{\ee_t + \gamma \gg_t}^2_2 \leq (1- \delta)(1 + \eta)\norm{\ee_t}^2_2 + \gamma^2 (1- \delta)(1 + 1/\eta)\norm{\gg_t}^2_2\,.
\]
Here on is simple algebraic computations to solve the recurrence relation above:
\begin{align*}
  \expect \norm{\ee_{t+1}}^2 &\leq (1- \delta)(1 + \eta)\expect \norm{\ee_t}^2_2 + \gamma^2 (1- \delta)(1 + 1/\eta)\expect \norm{\gg_t}^2_2\\
  &\leq  \sum_{i=0}^t [(1- \delta)(1 + \eta)]^{t-i}\gamma^2 (1- \delta)(1 + 1/\eta)\expect \norm{\gg_i}^2_2\\
  &\leq \sum_{i=0}^\infty [(1- \delta)(1 + \eta)]^{i}\gamma^2 (1- \delta)(1 + 1/\eta)\sigma^2\\
  &= \frac{(1- \delta)(1 + 1/\eta)}{1 - (1- \delta)(1 + \eta)}\gamma^2\sigma^2 \\
  &= \frac{(1- \delta)(1 + 1/\eta)}{\delta - \eta(1- \delta)}\gamma^2\sigma^2\\
  &= \frac{2(1- \delta)(1 + 1/\eta)}{\delta}\gamma^2\sigma^2\,.
\end{align*}
Let us pick $\eta = \frac{\delta}{2(1 - \delta)}$ such that $1 + 1/\eta =(2 - \delta)/\delta \leq 2/\delta$. Plugging this in the above gives
\begin{align*}
  \expect \norm{\ee_{t+1}}^2 &\leq \frac{2(1- \delta)(1 + 1/\eta)}{\delta}\gamma^2\sigma^2 \leq \frac{4(1-\delta)}{\delta^2} \gamma^2 \sigma^2\,. \tag*{\qed}
\end{align*}

\subsection{Proof of non-convex convergence of {\esignsgd} (Theorem \ref{thm:ecsgd-non-convex})}
As outlined in the proof sketch, the analysis considers the actual sequence $\{\xx_t\}$ as an approximation to the sequence $\{\txx_t\}$, where $\txx_t = \xx_t - \ee_t$.
It satisfies the recurence
\[
  \txx_{t+1} = \xx_{t} - \ee_{t+1} - \cC(\pp_t) = \xx_t - \pp_t = \txx_t - \gamma \gg_t\,.
\]
Since the function $f$ is $L$-smooth,
\begin{align*}
  \expect_t\sbr*{f(\txx_{t+1})}
  &\leq f(\txx_t) + \lin*{\nabla f(\txx_t)}{\expect_t\sbr*{\txx_{t+1} - \txx_t}} + \frac{L}{2}\expect_t\sbr*{\norm*{\txx_{t+1} - \txx_t}^2_2}\\
  &= f(\txx_t) - \gamma\lin*{\nabla f(\txx_t)}{\expect_t\sbr*{\gg_t}} + \frac{L\gamma^2}{2}\expect_t\sbr*{\norm*{\gg_t}^2_2}\\
  &\leq f(\txx_t) - \gamma\lin*{\nabla f(\txx_t)}{\nabla f(\xx_t)} + \frac{L\gamma^2\sigma^2}{2}\,.
\end{align*}
In the above we need to get rid of $\nabla f(\txx_t)$ since we never encounter it in the algorithm. We can do so using an alternate definition of smoothness of $f$:
\[
  \norm{\nabla f(\xx) - \nabla f(\yy)}_2 \leq L\norm{\xx - \yy}_2\,.
\]
Using the above with $\xx = \xx_t$ and $\yy = \txx_t$ we continue as
\begin{align*}
  \expect_t\sbr*{f(\txx_{t+1})}
  &\leq f(\txx_t) - \gamma\lin*{\nabla f(\xx_t)}{\nabla f(\xx_t)} + \frac{L\gamma^2\sigma^2}{2} + \gamma\lin*{\nabla f(\xx_t) - \nabla f(\txx_t)}{\nabla f(\xx_t)}\\
  &\leq f(\txx_t) - \gamma\norm{\nabla f(\xx_t)}^2_2 + \frac{L\gamma^2\sigma^2}{2} + \frac{\gamma \rho}{2}\norm{\nabla f(\xx_t)}^2_2 + \frac{\gamma}{2 \rho}\norm{\nabla f(\xx_t) - \nabla f(\txx_t)}^2_2\\
  &\leq f(\txx_t) - \gamma\norm{\nabla f(\xx_t)}^2_2 + \frac{L\gamma^2\sigma^2}{2} + \frac{\gamma \rho}{2}\norm{\nabla f(\xx_t)}^2_2 + \frac{\gamma L^2}{2 \rho}\norm{\xx_t - \txx_t}^2_2\\
  &\leq f(\txx_t) - \gamma \rbr*{1- \frac{\rho}{2}}\norm{\nabla f(\xx_t)}^2_2 + \frac{L\gamma^2\sigma^2}{2}+ \frac{\gamma L^2}{2 \rho}\norm{\ee_t}^2_2 \,.
\end{align*}
In the second inequality follows from the mean-value inequality and holds for any $\rho > 0$. Lemma \ref{lem:bounded-error} helps us bound the norm of $\ee_t$:
\begin{equation*}
  \expect_t\sbr*{f(\txx_{t+1})} \leq f(\txx_t) - \gamma \rbr*{1- \frac{\rho}{2}}\norm{\nabla f(\xx_t)}^2_2 + \frac{L\gamma^2\sigma^2}{2}+ \frac{\gamma^3 L^2 \sigma^2}{2 \rho}\frac{4(1-\delta)}{\delta^2} \,.
\end{equation*}
Rearranging the terms and averaging over $t$ gives for $\rho < 2$ %
\begin{align*}
  \frac{1}{T+1}\sum_{t=0}^T \norm{\nabla f(\xx_t)}^2_2 &\leq  \frac{1}{\gamma \rbr*{1- \frac{\rho}{2}}(T+1)} \sum_{t=0}^T \rbr*{\expect\sbr*{f(\txx_{t})} - \expect\sbr*{f(\txx_{t+1})}} \\&\qquad\qquad + \frac{L\gamma\sigma^2}{2 - \rho}+ \frac{ \gamma^2 L^2 \sigma^2}{\rho (2 - \rho)}\frac{4(1-\delta)}{\delta^2}\\
  &\leq \frac{f(\xx_0) - f^\star}{\gamma (T+1) (1 - \rho/2)} + \frac{L\gamma\sigma^2}{2 - \rho}+ \frac{4 \gamma^2 L^2 \sigma^2 (1- \delta)}{\rho (2 - \rho)\delta^2}\,.  \tag*{\qed}
\end{align*}
Using $\rho = 1$ gives the result as stated in Theorem \ref{thm:ecsgd-non-convex}. Note that we can choose $\rho$ arbitarily close to 0. By choosing $\rho \overset{T = \infty}{\longrightarrow} 0$, we can show that the asymptotic convergence of {\esignsgd} is in fact exactly the same as SGD.

\subsection{Proof of non-convex convergence of SGD (Remark \ref{rem:ecsgd-non-convex-rate})}
The proof is exactly as above, but with some simplifications since (essentially) $\ee_t = 0$. So using the smoothness of $f$ as before and the fact that $\expect_t[\xx_{t+1}] = \xx_t - \gamma \nabla f(\xx_t)$ we get that
\begin{align*}
  \expect_t[f(\xx_{t+1})] &\leq f(\txx_t) + \lin*{\nabla f(\xx_t)}{\expect_t\sbr*{\xx_{t+1} - \xx_t}} + \frac{L}{2}\expect_t\sbr*{\norm*{\xx_{t+1} - \xx_t}^2_2}\\
  &= f(\xx_t) - \gamma \norm{\nabla f(\xx_t)}^2 + \frac{L \sigma^2 \gamma^2}{2}\,.
\end{align*}
Now rearranging the terms and averaging over $T$ gives
Rearranging the terms and averaging over $t$ gives
\begin{align*}
  \frac{1}{T+1}\sum_{t=0}^T \norm{\nabla f(\xx_t)}^2_2 &\leq  \frac{1}{\gamma (T+1)} \sum_{t=0}^T \rbr*{\expect\sbr*{f(\txx_{t})} - \expect\sbr*{f(\txx_{t+1})}} + \frac{L\gamma\sigma^2}{2}\\
  &\leq \frac{f(\xx_0) - f^\star}{\gamma (T+1)} + \frac{L\gamma\sigma^2}{2}\,.  \tag*{\qed}
\end{align*}

\subsection{Proof of compression of unbiased estimators (Remark \ref{rem:unbiased-estimator})}
Suppose that $\expect[\cU(\vv)]=\vv$ and that $\expect\sbr*{\norm{\cU(\vv)}}^2 \leq k \norm{\vv}^2$. Then
\begin{align*}
\expect\sbr*{\norm*{\tfrac{1}{k}\cU(\vv) - \vv}^2} &= \frac{1}{k^2}\expect\sbr*{\norm{\cU(\vv)}^2} - \frac{2}{k}\lin*{\expect[\cU(\vv)]}{\vv}  + \norm{\vv}^2\\
&\leq \frac{1}{k}\norm{\vv}^2 - \frac{2}{k}\norm{\vv}^2 + \norm{\vv}^2\\
&= \rbr*{1 - \frac{1}{k}}\norm{\vv}^2\,.  \tag*{\qed}
\end{align*}

\subsection{Proof of convex convergence of {\esignsgd} (Theorem \ref{thm:ecsgd-convex})}
As in the proof of Theorem \ref{thm:ecsgd-non-convex}, we start by considering the sequence $\{\txx_t\}$ where $\txx_t = \xx_t - \ee_t$. As we saw, $\txx_{t+1} = \txx_t - \gamma \gg_t$. Suppose that $\xx_t^\star$ is an optimum solution. We will abuse notation here and use $\partial f(\xx)$ to mean any subgradient of $f$ at $\xx$.
\begin{align*}
  \expect_t\sbr*{\norm{\txx_{t+1}- \xx^\star}^2} &= \expect_t\sbr*{\norm{\txx_{t} - \gamma \gg_t - \xx^\star}^2}\\
  &= \norm{\txx_{t} - \xx^\star}^2 + \gamma^2 \expect_t\sbr*{\norm{\gg_t }^2} - 2\gamma\lin*{\expect_t\sbr*{\gg_t}}{\txx_t - \xx^\star}\\
  &\leq \norm{\txx_{t} - \xx^\star}^2 + \gamma^2 \sigma^2 - 2\gamma\lin*{\partial f(\xx_t)}{\txx_t - \xx^\star}\,.
\end{align*}
We do not want $\txx_t$ appearing in the right side of the equation and so we will replace it with $\xx_t$ and use Lemma \ref{lem:bounded-error} to bound the error:
\begin{align*}
\expect_t\sbr*{\norm{\txx_{t+1}- \xx^\star}^2} &\leq \norm{\txx_{t} - \xx^\star}^2 + \gamma^2 \sigma^2 - 2\gamma\lin*{\partial f(\xx_t)}{\xx_t - \xx^\star} + 2\gamma\lin*{\partial f(\xx_t)}{\xx_t - \txx_t}\\
&= \norm{\txx_{t} - \xx^\star}^2 + \gamma^2 \sigma^2 - 2\gamma\lin*{\partial f(\xx_t)}{\xx_t - \xx^\star} - 2\gamma\lin*{\partial f(\xx_t)}{\ee_t}\\
&\leq \norm{\txx_{t} - \xx^\star}^2 + \gamma^2 \sigma^2 - 2\gamma\lin*{\partial f(\xx_t)}{\xx_t - \xx^\star} + 2\gamma\norm{\partial f(\xx_t)}{\norm{\ee_t}}\\
&\leq \norm{\txx_{t} - \xx^\star}^2 + \gamma^2 \sigma^2 - 2\gamma\lin*{\partial f(\xx_t)}{\xx_t - \xx^\star} + \frac{4\gamma^2 \sigma\sqrt{1 - \delta}}{\delta}\norm{\partial f(\xx_t)}\,.
\end{align*}
We use Cauchy-Shwarzch in the third step, and Lemma \ref{lem:bounded-error} in the last step. We will use the loose bound $\norm{\partial f(\xx_t)} \leq \sigma$. This is the key difference between the non-smooth case and the smooth (and strongly-convex) case considered in \cite{stich2018sparsified}. In the smooth case, the term $\norm{\nabla f(\xx_t)}^2$ can be bounded by the error $f(\xx_t) - f^\star$. This implies that the last term in the equation above goes to 0 faster than $\gamma^2$, allowing the asymptotic rate to not depend on the compression quality $\delta$. However, this is not true in the non-smooth case making the dependence of the rate on $\delta$ unavoidable. We now have
\begin{equation}\label{eqn:non-smooth-ecsgd-progress}
\expect_t\sbr*{\norm{\txx_{t+1}- \xx^\star}^2} \leq \norm{\txx_{t} - \xx^\star}^2 + \gamma^2 \sigma^2 - 2\gamma\lin*{\partial f(\xx_t)}{\xx_t - \xx^\star} + \frac{4\gamma^2 \sigma^2\sqrt{1 - \delta}}{\delta}\,.
\end{equation}
Recall that $\ee_0 = 0$ and so $\txx_0 = \xx_0$. Rearranging the terms and averaging, we get
\begin{align*}
  \frac{1}{T+1} \sum_{t=0}^T \expect\sbr*{\lin*{\partial f(\xx_t)}{\xx_t - \xx^\star}} &\leq \frac{1}{2\gamma (T+1)}\sum_{t=0}^T \rbr*{\expect\sbr*{\norm{\txx_{t}- \xx^\star}^2} - \expect\sbr*{\norm{\txx_{t+1}- \xx^\star}^2}} \\
  &\qquad\qquad+ \frac{\gamma}{2} \sigma^2 + \frac{2 \gamma \sigma^2 \sqrt{1 - \delta}}{\delta}\\
  &\leq \frac{\norm{\xx_0 - \xx^\star}^2}{2\gamma (T+1)} + \gamma \sigma^2\rbr*{\frac{1}{2} + \frac{2\sqrt{1 - \delta}}{\delta}}\,.
\end{align*}
To finish the proof, we have to simply use the convexity of $f$ twice on the left hand side of the above inequality:
\[
  \frac{1}{T+1} \sum_{t=0}^T \expect\sbr*{\lin*{\partial f(\xx_t)}{\xx_t - \xx^\star}} \geq \frac{1}{T+1} \sum_{t=0}^T f(\xx_t ) - f(\xx^\star) \geq  f( \frac{1}{T+1} \sum_{t=0}^T \xx_t) - f(\xx^\star) = f(\bar \xx_T) - f(\xx^\star)\,.
\]

For the standard rate of SGD in remark \ref{rem:ecgsd-non-smooth-rate}, we just set $\delta = 0$. \hfill $\qed$

\subsection{Proof relating linear span of gradients to pseudo-inverse (Lemma \ref{lem:span-pseudo-inverse})}
Recall that $\mA \in \real^{n\times d}$ for $n < d$. Assume without loss of generality that the rows of $\mA$ are linearly independent and hence $\mA$ is of rank $n$. The stochastic gradient for $f(\xx) = \norm{\mA\xx - \bb}^2$ is of the form $\sum \alpha_i \mA_{i,:}$ where $\mA_{i,:}$ indicates the $i$th column of $\mA$. If $\xx_t$ is in the linear span of the stochastic gradients, then there exists a vector $\balpha_t \in \R^n$ such that
\[
  \xx_t = \mA^\top\balpha_t\,.
\]
Suppose $\xx^\star$ is the solution reached. Then $\mA \xx^\star = \bb$ and also $\xx^\star = \mA^\top\balpha^\star$ for some $\balpha^\star$. Hence $\balpha^\star$ must satisfy
\[
  \mA \mA^\top \balpha^\star = \bb\,.
\]
Since the rank of $\mA$ is $n$, the matrix $\mA \mA^\top \in \real^{n\times n}$ is full-rank and invertible. This means that there exists an unique solution to $\balpha^\star$ and $\xx^\star$:
\begin{align*}
  \balpha^\star = \rbr*{\mA \mA^\top}^{-1}\bb \quad \text{and} \quad \xx^\star = \mA^\top\balpha^\star = \mA^\top \rbr*{\mA \mA^\top}^{-1}\bb\,. \tag*{\qed}
\end{align*}

\end{document}